\long\def\comment#1{}
\DeclareMathOperator{\Prob}{\mathbb{P}}
\DeclareMathOperator*{\argmax}{arg\,max}
\newcommand{\norm}[1]{\left\lVert#1\right\rVert}
\newenvironment{proof}{\paragraph{Proof:}}{\hfill$\square$}
\newtheorem{prop}{Proposition}
\newtheorem{lemma}{Lemma}
\newtheorem{theorem}{Theorem}
\numberwithin{theorem}{section}
\numberwithin{defn}{section}
\numberwithin{prop}{section}
\numberwithin{lemma}{section}
\numberwithin{cor}{section}
\icmltitlerunning{Scalable nonparametric Bayesian learning for 
heterogeneous and dynamic velocity fields}
\begin{document}

\twocolumn[
\icmltitle{Scalable nonparametric Bayesian learning for \\
heterogeneous and dynamic velocity fields
}



\icmlsetsymbol{equal}{*}

\begin{icmlauthorlist}
\icmlauthor{Sunrit Chakraborty}{equal,um}
\icmlauthor{Aritra Guha}{equal,duke}
\icmlauthor{Rayleigh Lei}{um}
\icmlauthor{XuanLong Nguyen}{um}
\end{icmlauthorlist}

\icmlaffiliation{um}{Department of Statistics, University of Michigan}
\icmlaffiliation{duke}{Department of Statistical Science, Duke University}

\icmlcorrespondingauthor{Aritra Guha}{aritra.guha@duke.edu }

\vskip 0.3in
]



\printAffiliationsAndNotice{\icmlEqualContribution} 

\begin{abstract}
Analysis of heterogeneous patterns in complex spatio-temporal data finds usage across various domains in applied science and engineering, 
including training autonomous vehicles to navigate in complex traffic scenarios. Motivated by applications arising in the transportation domain, in this paper we develop a model for learning heterogeneous and dynamic patterns of velocity field data. We draw from basic nonparameric Bayesian modeling elements such as hierarchical Dirichlet process and infinite hidden Markov model, while the smoothness of each homogeneous velocity field element is captured with a Gaussian process prior. Of particular focus is a scalable approximate inference method for the proposed model; this is achieved by employing sequential MAP estimates from the infinite HMM model and an efficient sequential GP posterior computation technique, which is shown to work effectively on simulated data sets. Finally, we demonstrate the effectiveness of our techniques to the NGSIM dataset of complex multi-vehicle interactions. 


\end{abstract}
\section{Introduction}

A common theme arising in many modern engineering applications is that there often is a large amount of data available via spatiotemporal dynamics generated in a potentially fast-paced and highly heterogeneous environment; yet there is a need to extract meaningful and interpretable patterns out of such complexities in a computationally efficient way. The learned patterns further enhance the user's understanding and improve subsequent decision-making. While there are many examples in a variety of domains, what motivates our present work the most is the analysis of traffic flow patterns out of high-volume and streaming measurements of vehicles passing through a busy thoroughfare.

\begin{figure}[t]
\centering
\includegraphics[width=0.49\textwidth]{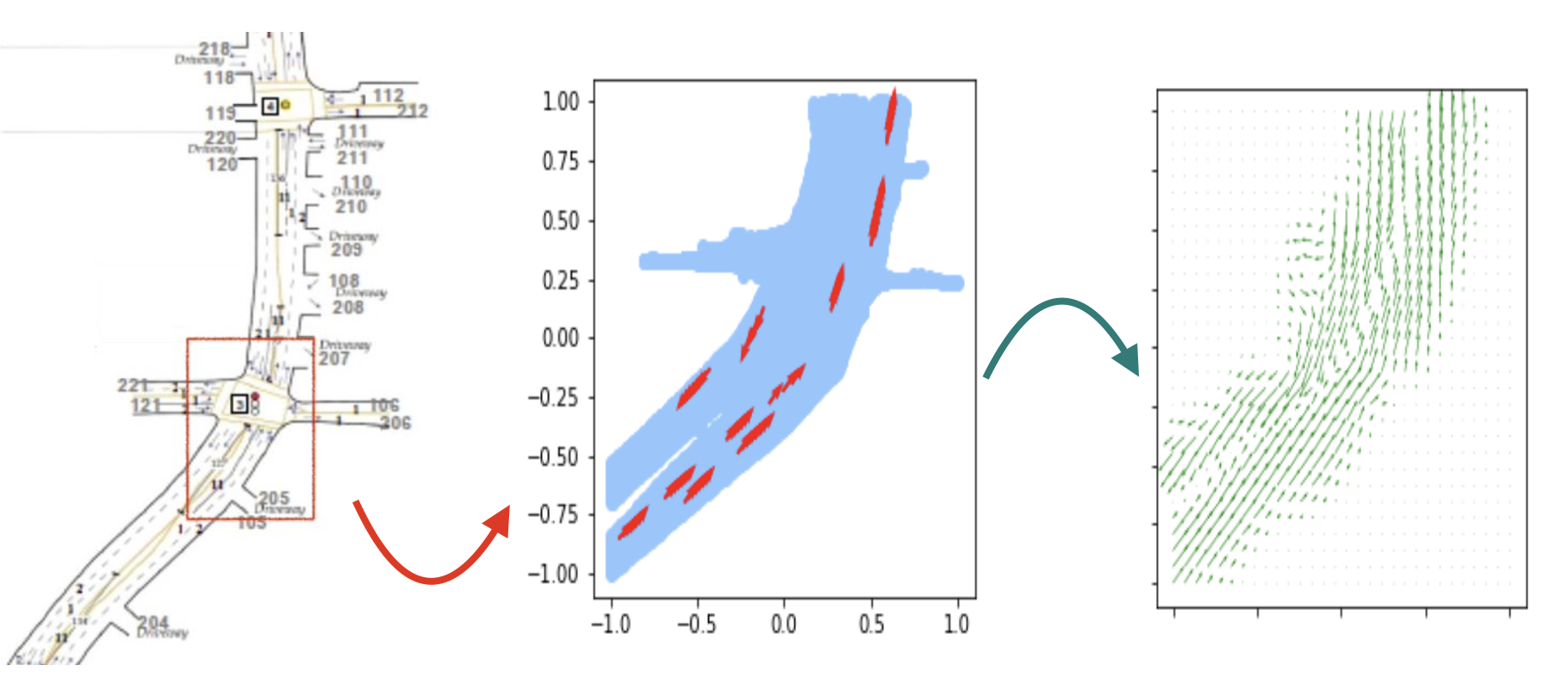}
\caption{Left figure shows a portion of a Los Angeles boulevard, middle figure shows a frame of traffic presence passing through an intersection; right figure shows the corresponding traffic flow pattern represented as a velocity field on $\mathbb{R}^2$ obtained by our method.}
\label{real}
\end{figure}

A newcomer to a large and busy city may be initially shocked upon observing a bewildering range of individual driving behaviors and of cars moving in varying speeds and directions, competing and challenging for an open lane at any given moment. Yet, underneath this seemingly intractable complexity, one may eventually find the calming ebbs and flows of movements regulated by traffic control systems and the rhythm of the day. Such patterns of traffic flows can be represented by a two-dimensional velocity field indexed on a two-dimensional plane (see  Fig.~\ref{real} for an illustration). The velocity field at a given time point records the expected velocity vector at different locations, if a car is present there at that moment. Unless there is an unusual disruption, one expects that the velocity vector varies smoothly, both in direction and magnitude, through the spatial domain. Thus, we adopt the viewpoint that a smooth vector field is a useful mathematical device to describe the current state of traffic flow at any given moment~\citep{guo2019modeling_dpgp,joseph2011bayesian_dpgp,Pedestrian_BNP}.

In this paper, motivated by the aforementioned application, and to provide a fast posterior inference algorithm for parameters and quantities of interest, we aim to create a probabilistic (Bayesian) model for learning smooth vector field patterns out of heterogeneous and dynamic time series data. Our starting point is to model a smooth velocity field after a multi-response Gaussian process defined on a spatial domain, an idea that was also explored in ~\citep{Kim:2011:Gaussian-Process}. To account for the temporal dynamics of spatial patterns, we employ a discrete-time hidden Markov chain that operates on the state space of smooth functions (representing the vector fields endowed by a Gaussian process prior). The vector fields are not observed directly; one only has access to frames of traffic passing through the road (see Fig.~\ref{real}). Moreover, to account for the highly heterogeneous environment of movements, we allow the number of hidden states to be unbounded. This is achieved by drawing from the powerful nonparametric Bayesian elements of infinite hidden Markov models (HMM) and hierarchical Dirichlet processes (HDP) \citep{beal2002infinite,Teh-etal-06}. 


In short, we propose an infinite hidden Markov model, in which the underlying Markov chain operates on the space of Gaussian process vector fields, and the measurement noise model also follows that of a Gaussian distribution.
Although the existing modeling elements are well-studied and have been explored in a wide range of applications, viz. Dirichlet processes for modeling heterogeneity~\citep{Ferguson-73,Antoniak-74,Ghosal-VDV-BNP-book}, hidden Markov models~\citep{Rabiner} and its infinite version~\citep{beal2002infinite,Teh-etal-06} for time series analysis, and Gaussian processes for spatial data~\citep{Cressie-93,Kim:2011:Gaussian-Process}, combining all such elements into a single nonparametric Bayesian modeling framework and applying it to high-dimensional velocity field data seems new and quite exciting for the application we have in our hand. 


Due to the complexity of the proposed model, a particular focus of this work is on the development of a scalable approximate inference method to overcome the shortcoming of existing computational approaches. The standard techniques for Bayesian inference include MCMC~\citep{Gelfand-MCMC,Fox-etal-09} or variational inference (VI)~\citep{Blei-et-al,SVI-HMM}. Due to the large number of latent variables in combination with complex modeling structures, MCMC algorithms tend to be inefficient. On the other hand, VI algorithms (cf.~\citep{jordan1999introduction,Blei-VI-DPMM,hoffman2013stochastic,mandt2017stochastic}) are known to have difficulty producing statistically accurate posterior distributions, especially for finite samples. Our computational innovations include employing sequential MAP estimates from the infinite HMM model and efficient sequential GP posterior computation techniques. The latter techniques are crucial in overcoming very large covariance matrix, which is a consequence of the GP observed at a large number of spatial locations. They include using a block matrix inversion matrix using Schur's complement. 
As we demonstrate in Table~\ref{table:1} and~\ref{table:3}, these innovations allow us to analyze 10,000 total observations in around two minutes.

In summary, our contributions in this work are three-fold. Firstly, we study an infinite hidden Markov model on state space of multi-dimensional vector fields supported by a smooth Gaussian process prior.  Secondly, we provide explicit computations via MAP estimates and devise a fast inference algorithm for the proposed model. Thirdly, the application to understanding of traffic encounters is a novel utilization of the model and the algorithm. 

Other related work include~\citep{Fox-etal-11}, in which an infinite HMM combined with HDP has been used successfully to model speaker diarization behavior~\citep{Fox-etal-11}. By contrast, our work appeals to an infinite HMM for the high-dimensional velocity field hidden state space.
There have also been prior work that combines both DP and GP modeling elements ~\citep{guo2019modeling_dpgp,joseph2011bayesian_dpgp,Pedestrian_BNP}. The temporal modeling of the patterns in our work brings forward a novel aspect to the application perspective, which is potentially useful in improving autonomous vehicles based on interpretable learned patterns. Moreover, previous implementations of the DP-GP algorithms~\citep{guo2019modeling_dpgp} are incapable of dealing with presence of large number of agents in each temporal epoch. As demonstrated in Section~\ref{section:experiments}, our computational techniques help to overcome this shortcoming effectively.


The remainder of the paper is organized as follows. In section~\ref{section:review} we briefly review existing ideas necessary for the remainder of the paper, section~\ref{section:model_and_inference} describes our model. Section~\ref{section:efficient_computation} harps on the inference algorithm while section~\ref{section:experiments} demonstrates experimental results on simulated datasets and NGSIM traffic data.


\section{Preliminaries}
\label{section:review}
In this section, we briefly describe several key Bayesian nonparametric modeling elements for clustering data based on latent topics with unknown number of clusters and latent temporal dynamics. We also describe Gaussian processes and multivariate response Gaussian processes, which we use as the prior on the space to smooth velocity fields.

\subsection{Infinite HMM}
The infinite hidden Markov model was first proposed in \citep{beal2002infinite} and subsequently shown to be an instance of the general Hierarchical Dirichlet process model of \citep{Teh-etal-06}. We describe the infinite HMM setup as follows.

Assume that the behavioral outcome observed at each time-point is a noisy version of a specific underlying pattern among infinitely many such possible patterns. Let $\phi_1, \phi_2, \dots \overset{iid}{\sim} H$  be used to denote the underlying patterns,  with $\phi_k$ used to assemble the pattern associated with the $k^{th}$ component. On the other hand, at each time point $t$, we have a random variable, $s_t \in \mathbb{N}$, which denotes which pattern is active at time $t$. The key assumption underlying the (hidden) Markov model structure is that the active pattern at time $t$ conditioned on the active pattern at $t-1$ is independent of prior history of active patterns. 

Specific to the infinite HMM setup, the choice of pattern at each step $t$ affects the hidden pattern active at $t+1$ via an oracle value $o_t$. If $o_t=0$, the choice of active pattern depends on the historical counts of respective pattern types, whereas if $o_t=1$, an oracle is invoked.
Before we mathematically define the model, we introduce some notations for count variables that will be useful:
\begin{align}
    N^{(t)}_j &= \sum_{u=1}^t \mathbbm{1}(s_u = j),\;
    n^{(t)}_{ij} = \sum_{u=1}^{t-1} \mathbbm{1}(s_u=i, s_{u+1}=j), \label{eq:n} \\
    m^{(t)}_j &= \sum_{u=1}^{t-1} \mathbbm{1}(s_{u+1} = j, o_u = 1). \label{eq:m}
\end{align}
They respectively represent the number of times a state has been visited, the number of transitions from one state to another, and the number of times a state has been visited while invoking the oracle until time $t$. If $s_1,\dots, s_t$ are the states for time points up to $t$ and $\tilde{K}^{(t)}$ are the number of distinct states explored,
the infinite HMM model (with parameters $\alpha$ and $\gamma$) is completely described by the process of sampling $s_{t+1}$. This is done in the following manner.
\begin{align}
    \mathbb{P}(s_{t+1} = j, o_t=0 \mid s_t=i, n,\alpha) &= \frac{n_{ij}^{(t)}}{\sum_{j'=1}^{\tilde{K}^{(t)}} n_{ij'}^{(t)} + \alpha} \nonumber \\
    \mathbb{P}(o_t= 1\mid s_t=i, n,\alpha) &= \frac{\alpha}{\sum_{j'=1}^{\tilde{K}^{(t)}} n_{ij'}^{(t)} + \alpha} \label{eq:step1}
    \end{align}
Moreover, given that we have defaulted to an oracle ($o_t=1$), the transition satisfies
\begin{align}
    s_{t+1}\mid &s_t=i, m, \gamma, o_t=1 \sim \nonumber\\ &\sum_{j=1}^{\tilde{K}^{(t)}} \frac{m_j^{(t)}}{\sum_{j'} m_j^{(t)} + \gamma} \delta_j + \frac{\gamma}{\sum_{j'} m_j^{(t)} + \gamma} \delta_{\tilde{K}^{(t)}+1} \label{eq:step2}
    \end{align}
When $K+1$ is chosen, the system explores a new state $K+1$ and gets $\phi_{K+1}$ previously unused. This two layer structure achieves the same objective as the HDP (described in the \textbf{Appendix}). Now to complete the HMM structure, when $s_t = k$, we assume that the observation emission follows $x_t \sim F(\cdot| \phi_k)$. This completes the description of the Infinite HMM model. The model is illustrated in Figure \ref{iHMM}.

\begin{figure}[h]
\centering
\includegraphics[width=0.45\textwidth]{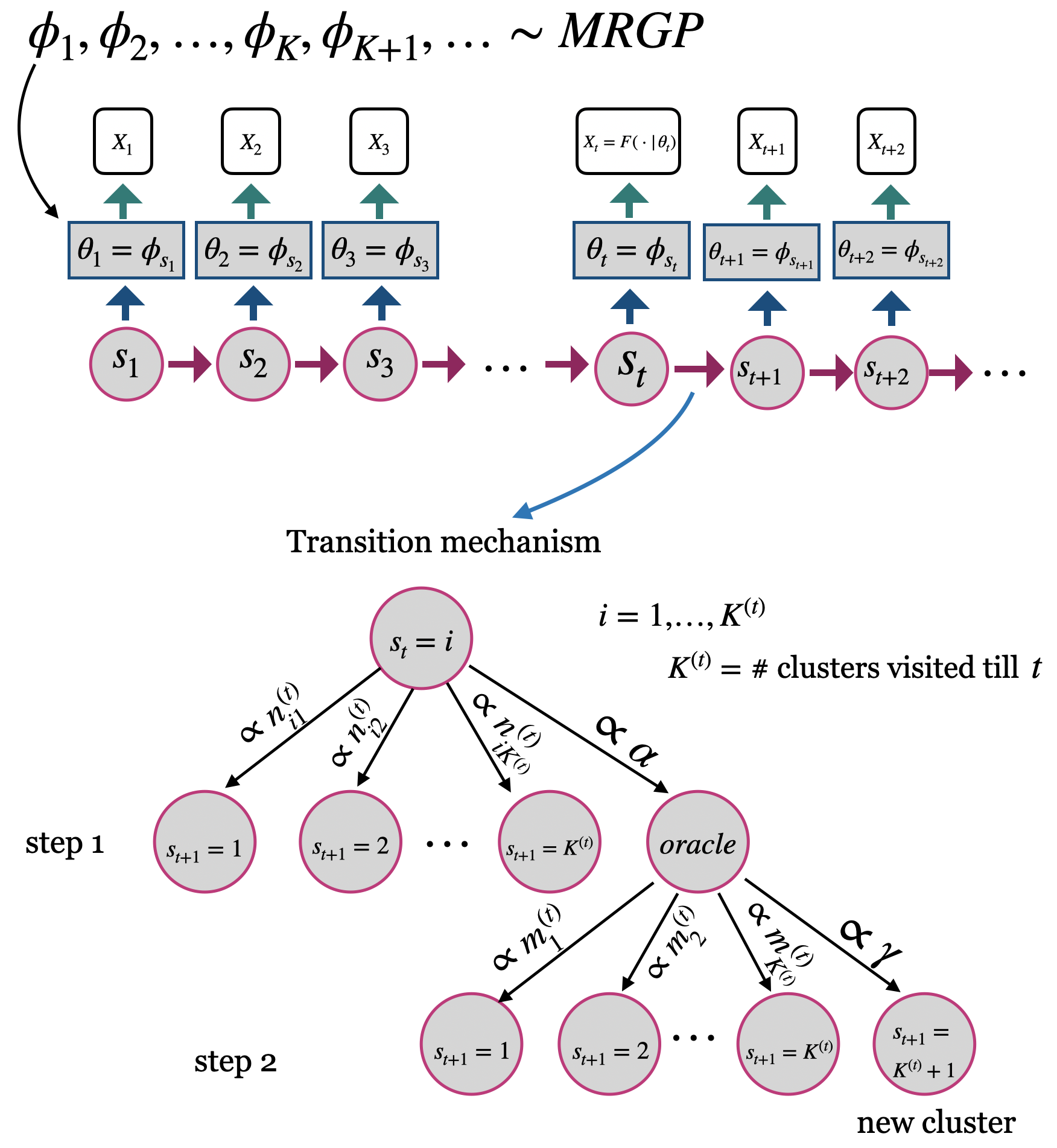}
\caption{Graphical illustration of the infinite HMM model.}
\label{iHMM}
\end{figure}

\begin{figure*}[h]
\centering
\includegraphics[width=0.95\textwidth]{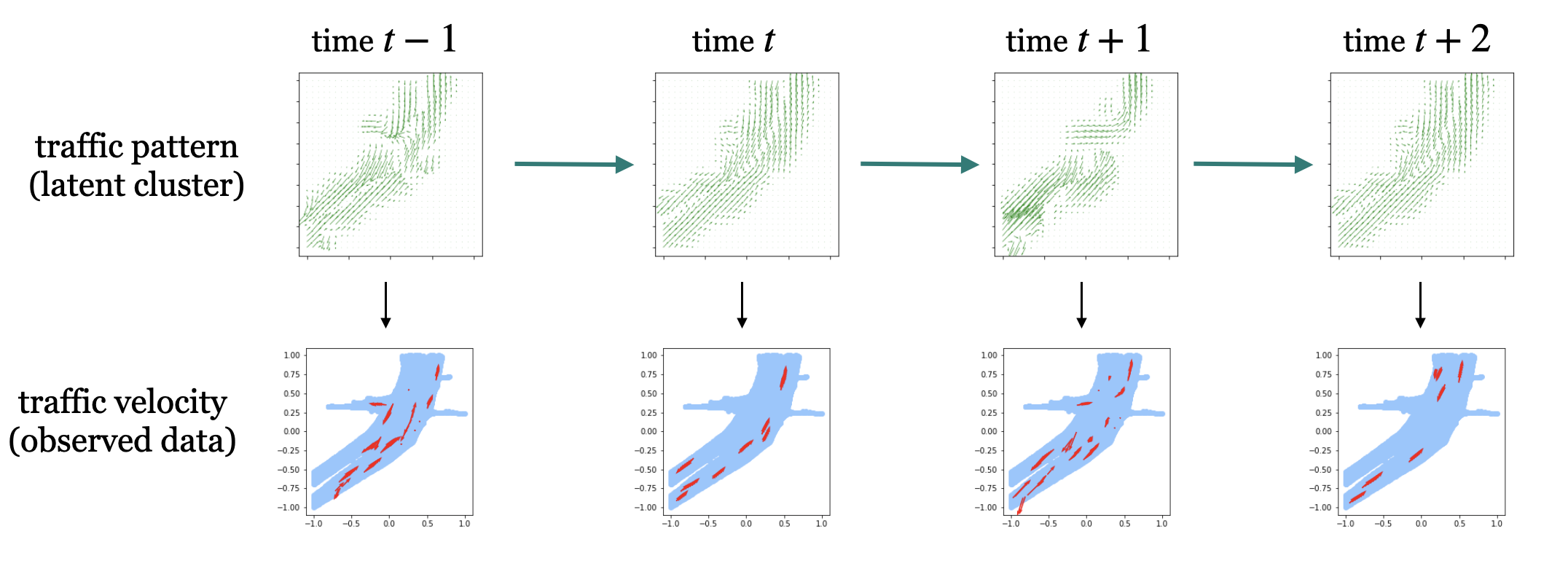}
\caption{A simple example of our model: at each time $t$, a latent cluster is chosen and based on that traffic pattern, we observe some real data --- velocity vectors at some spatial points. Note in this example the clusters at time $t$ and $t+2$ are same --- the frames at these time points correspond to the same traffic pattern.}
\label{temporal}
\end{figure*}

\subsection{Gaussian process}
Gaussian processes (GP) provide a mechanism to model (smooth) functions on arbitrary index spaces. A stochastic process $\{X(t):X(t)\in \mathbb{R},\ t\geq 0\}$ is called a GP with mean $m(\cdot)$ and covariance kernel $K(\ast,\ast)$  if for any finite $T:=\{t_1,\dots,t_k\} \subset [0,\infty)$,
\begin{eqnarray}
(X(t_1),\dots,X(t_k)) \sim \mathcal{N}\left(m\big|_T,K\big|_{T\times T}\right),
\label{distr:mvn}
\end{eqnarray}
where $m\big|_T=(m(t_1),\dots,m(t_k))$ and $K\big|_{T \times T}(i,j)=K(t_i,t_j), \text{ } 1 \leq i,j\leq k$.

\subsection{Multi-response Gaussian process}
Before we introduce the multi-response Gaussian process (MRGP), we need to define the matrix normal distribution. This distribution will allow us to assign probability to the stochastic process. 

A random matrix $Z_{k\times d}$ is said to follow a matrix normal distribution with parameters $M_{k \times d}, U_{k \times k}$ and $V_{d \times d}$, i.e., $Z \sim \mathcal{MN}_{k\times d}(M, U, V)$, if
\begin{eqnarray}
\text{vec}(Z)\sim {\mathcal {N}}_{kd}(\mathrm {vec} (M),V \otimes U ).
\label{distr:mnr}
\end{eqnarray}
The Kronecker product is denoted by $\otimes$  and ${ \mathrm {vec} (M )}$ signifies the vectorization of $M$. We now define the MRGP. 

Let $f:\mathbb{R}^p \to \mathbb{R}^d$ and we write for $z\in \mathbb{R}^p$, $f(z) = (f_1(z), \dots, f_d(z))^T \in \mathbb{R}^d$. Given a kernel $K:\mathbb{R}^p \times \mathbb{R}^p \to \mathbb{R}^+$ and a mean function $\mu:\mathbb{R}^p\to\mathbb{R}^d$, we write $f \sim MRGP(\mu, K, \rho)$ if for any finite $n$ and any $z_1,\dots,z_n \in\mathbb{R}^p$, we posit the following matrix normal distribution 
\begin{equation}
    f(z_{1:n}) = \begin{pmatrix}
    f_1(z_1)  & \dots & f_d(z_1) \\
    f_1(z_2)  & \dots & f_d(z_2) \\
    \vdots  & \ddots & \vdots \\
    f_1(z_n)  & \dots & f_d(z_n)
    \end{pmatrix} \sim MN_{nd}\left(M, \Sigma, \Omega\right).
    \label{model:mrgp}
\end{equation}
Here, $z_{1:n}=(z_1,\dots,z_n)$, $M\in\mathbb{R}^{n\times d}$ with $M_{ij} = \mu_j(z_i)$, 
\begin{align}
    \Sigma &= \begin{pmatrix}
    K(z_1,z_1) & K(z_1,z_2) & \dots & K(z_1,z_n) \\
    K(z_2,z_1) & K(z_2,z_2) & \dots & K(z_2,z_n) \\
    \vdots & \vdots & \ddots & \vdots \\
    K(z_n,z_1) & K(z_n,z_2) & \dots & K(z_n,z_n)
    \end{pmatrix}\\
    \Omega &= \begin{pmatrix}
    1 & \rho & \dots & \rho \\
    \rho & 1 & \dots & \rho \\
    \vdots & \vdots & \ddots & \vdots \\
    \rho & \rho & \dots & 1
    \end{pmatrix}.
\end{align}
In other words, $\Sigma$ captures the covariance across the rows and $\Omega$ across the columns. In our case, we fix $\Omega$ as the  equicorrelation($\rho$) matrix of size $d\times d$ and $\Sigma$ is a $n\times n$ matrix formed using the kernel $K$ as $\sigma_{ij} = K(z_i,z_j)$.

We choose $\mu \equiv 0$ and we use the Radial Basis Function (RBF) kernel $K(x_1, x_2) = \sigma_0^2 \exp\left\{-\frac{\norm{x_1-x_2}^2}{2\ell_0^2}\right\}$ where $\sigma_0^2$ is the kernel variance and $\ell_0$ is the kernel lengthscale.

\section{Data model}
\label{section:model_and_inference}
We assume that the data is spatio-temporal in nature. More specifically, given an underlying spatial domain $\mathcal{B} \subset \mathbb{R}^p$. Let us denote $\mathcal{F}=\{f:B \to \mathbb{R}^d\}$ as a space of functions with domain in $B$ and range in $\mathbb{R}^d$. We are given a stochastic process, $\{X_t:X_t\in \mathcal{F},\ t\geq 0\}$, which we wish to model. In other words, at each discrete time-point $t=1,2,\dots,T$, we have a system that outputs a function. Moreover, at each time point $t$, we only observe the outputs $X_t(z_1),\ldots, X_t(z_N)$, for some $z_1,\ldots, z_N \in \mathcal{B}$.

The \textbf{key assumption} underlying our model is that there exist an \textbf{unknown number of true patterns (or functions) $\phi^0_1,\phi^0_2, \dots \in \mathcal{F}$} which give rise to the observed patterns as follows. Suppose at time point $t$, pattern $s_t$ is active, then the observations at time point $t$ are modeled as:
\begin{eqnarray}
\label{eq:noise}
X_t(z_n) &\sim \mathcal{N}(\phi^0_{s_t}(z_n),  \sigma^2\mathbbm{I}) \qquad n = 1, 2, \ldots, N.
\end{eqnarray}

We next discuss how to model the random selection of patterns at each time $t$ by drawing from our intuition about modeling velocity flow patterns relevant to traffic movements. Traffic flow patterns at a time point are directly influenced by the patterns of traffic lights. How other patterns affect flow patterns might depend on the time of day, which in turn affect how the flow patterns behave locally in time. While it is expected that flow patterns at time points close to each other would be strongly dependent, it is reasonable to model the flow patterns as independent whenever they are separated by a large time interval. In that regard, Markov chains form the simplest objects to model changes in behavior locally across time. For this paper we will focus on 1-step Markov Chain via a hidden Markov model for choosing states. The movement of the Markov chain is guided by transition probabilities between different states. Since we want to be flexible about the number of states, we allow for an infinite number of latent states, each having an infinite length transition probability vector for moving to the next state. Infinite HMMs therefore provide an appropriate setup to model such transitions.

Moreover, we want to be flexible about the nature of the velocity flow. The basic assumption underlying a velocity flow is that each location in a region ($\mathcal{B}$ in this case) is associated with a velocity. The collection of all the velocities across all such locations is a velocity field. GPs are flexible objects for modeling arbitrary multivariate functions on spatial domains. We therefore assume that each hidden velocity field pattern, labelled as $\phi_1,\phi_2,\ldots$, (different from the true underlying patterns $\phi^0_1,\phi^0_2, \dots$ ) is modelled as realisations from a MRGP with RBF kernel $K(\cdot,\cdot)$ in a suitable domain. 

\paragraph{Model:}
The complete model is outlined as follows.
\begin{equation}
    \begin{aligned}
        \phi_1, \phi_2, \dots  &\sim MRGP(0,K,\rho) \\
        s_1, \dots, s_T &\sim \text{infinite HMM}(\alpha, \gamma) \\
        X_t(z_n) \mid \{\phi_k\} &\sim \mathcal{N}(\phi_{s_t}(z_n), \sigma^2\mathbbm{I}) & t = 1, 2, \ldots, T;\\
        & & n = 1, 2, \ldots, N.    \label{eq:model_inf_hmm_gp}
    \end{aligned}
\end{equation}
Note that in the model, we assume that $z_1,\ldots,z_N$ are common to all time-points $t$, but our model can be easily extended to the case of observing velocity flows in different locations across different time points. The analysis remains similar to the one performed below. We focus on this scenario to avoid over-burdening our notations.

The usefulness of the above model is multi-fold. First, it helps to extract each pattern of traffic movement corresponding to a given time point. Moreover, it provides us the ability to infer about the transition patterns. In the context of autonomous vehicles, while this is extremely useful to guide the vehicle about the current scenario of neighboring traffic, it also provides an understanding about what behavior to expect from neighboring vehicles at the next instant.

\section{Fast sequential posterior computation for Gaussian process}
\label{section:efficient_computation}

The full posterior with the above model is a complex object. While MCMC updates can be extremely slow due to invoking of forward-backward algorithm (especially with high-dimensional calculations with GPs), approximate techniques such as variational inference can often lead to inaccurate estimates. We therefore focus on maximum a posteriori (MAP) estimates for inference. 

Our particular inference scheme involves sequentially estimating the state variables, $s_t$, and oracle indicator variables, $o_t$ for $t = 1, 2, \ldots, T$ and the latent, spatial functions $\phi_k$ for $k = 1, 2, \ldots, \tilde{K}^{(t)}$. The steps for doing so via a one-pass MAP estimator are given in Algorithm~\ref{alg 1}. Before we elaborate on the computation of the different steps in Algorithm~\ref{alg 1}, the following notation will be helpful to describe this algorithm:

 Let $\tilde{K}^{(t)}$ be the number of observed patterns until $t$. Also, let $\mathcal{H}_t=\biggr\{\phi_{1:\tilde{K}^{(t)}},
    o_{1:t},s_{1:t},\\ \hspace{4 em}\{m_j^{(t')}\}_{k; t'=1:t},\{n_{ij}^{(t')}\}_{k,k'; t'=1:t}, \{N_j^{(t')}\}_{j; t'=1:t}\biggr\}$. Here, $m_j^{(t')},n_{ij}^{(t')},N_j^{(t')}$ are as defined in Eq.~\eqref{eq:m}.


\subsection{ Estimating state variable}

Let $z^{t+1}_{1:n^{(t+1)}}$ denotes the locations of observation at time $t+1$. By Bayes' rule, the posterior distribution of $s_{t + 1}$ is 
\begin{align}
    \Prob(s_{t+1}&=j \mid \mathcal{H}_t, X_1, X_2, \ldots, X_{t+1}) \propto \label{eq:s_post_summary} \\
    &\Prob(s_{t+1}=j|\mathcal{H}_t) \Prob(X_{t+1}(z^{t+1}_{1:n^{(t+1)}}) \mid s_{t+1}=j, \mathcal{H}_t) \nonumber
\end{align}
for $j = 1, 2, \ldots, \tilde{K}^{(t)} + 1$.  

We can use the transition probabilities for infinite HMM given in Eq.~\eqref{eq:step1} and Eq.~\eqref{eq:step2} to get that
\begin{eqnarray}
\label{eq:prior}
    & &\Prob(s_{t+1}=j \mid s_t = i, \mathcal{H}_t)  \\
    &=&
    \begin{cases}
        \frac{n^{(t)}_{ij}}{\sum_{j'=1}^{\tilde{K}^{(t)}} n^{(t)}_{ij'} + \alpha} + \frac{\alpha m^{(t)}_j}{(\sum_{j'=1}^{\tilde{K}^{(t)}} n^{(t)}_{ij'} + \alpha)(\sum_{j'=1}^{\tilde{K}^{(t)}} m^{(t)}_{j'} + \gamma)} , \nonumber \\
        \hspace{12 em} \text{ if }1 \leq j \leq \tilde{K}^{(t)} \nonumber \\
        \frac{\alpha \gamma}{(\sum_{j'=1}^{\tilde{K}^{(t)}}n^{(t)}_{ij'} + \alpha)(\sum_{j'=1}^{\tilde{K}^{(t)}} m^{(t)}_{j'} + \gamma)}, \text{ if }j = \tilde{K}^{(t)}+1. \nonumber 
    \end{cases}
\end{eqnarray}
The first line refers to some previous state $j$ being chosen at time $t+1$ and the first term is when it is chosen directly while the second term is for when it is chosen through the oracle. The second line refers to a new state being chosen $s_{t+1}=\tilde{K}^{(t)}+1$, which is only possible through the oracle. Eq.~\eqref{eq:prior} defines a prior for $s_{t+1}$ given all the required terms.

The computation of the second term in the RHS of Eq.~\eqref{eq:s_post_summary} is computed using Prop.~\ref{prop:prediction_X} and is provided in the appendix.

\subsection{Estimating oracle variable}

The posterior distribution of $o_{t + 1}$ is calculated using Bayes' rule as follows.

For $j = 1, 2, \ldots, K^{(t)} + 1$, $e \in \{0, 1\}$, by Lemma A.1 in the appendix,
\begin{align}
    \Prob(o_{t+1} =  e&\mid X_{t+1}(z^{t+1}_{1:n^{(t+1)}}), s_{t+1}= j, \mathcal{H}_t) \propto  \label{eq:o_post_summary} \\
    &\Prob(s_{t+1}=j\mid o_{t+1}=e, \mathcal{H}_t)\Prob(o_{t+1} = e \mid \mathcal{H}_t), 
   \nonumber
\end{align}



Recall that $o_t$ is a binary variable, which is 1 if $s_{t+1}$ was generated through the oracle and $0$ if $s_{t+1}$ was generated directly. We first describe the former case. For the \textbf{first term} in the RHS of Eq.~\eqref{eq:o_post_summary}, Eq.~\eqref{eq:step2} tells us that
\begin{eqnarray}
\label{eq:os_1}
& & \Prob(s_{t+1} = j\mid o_{t+1} = 1, \mathcal{H}_t) \nonumber\\ &=&
\begin{cases}
m^{(t)}_{j}/(\sum_{j'} m^{(t)}_{ij'} + \gamma),   \    j\in \{1,\dots, \tilde{K}^{(t)}\} \\
\gamma/(\sum_{j'} m^{(t)}_{ij'} + \gamma),   \  j=\tilde{K}^{(t)}+1.
\end{cases}
\end{eqnarray} 
In other words, it is the probability that the oracle was invoked to generate the next hidden state, $j$. Using Eq.~\eqref{eq:step1}, we get that the \textbf{second term} in the RHS of Eq.~\eqref{eq:o_post_summary} is
\begin{align}
\label{eq:o_1}
    \Prob(o_{t+1}=1\mid s_t = i, \mathcal{H}_t) = \alpha/(\sum_j n^{(t)}_{ij} + \alpha).
\end{align} 
This is the probability that the oracle is invoked at time $t+1$. While $s_t = i$ is contained in $\mathcal{H}_t$, we explicitly write it out to make clear that this probability depends on the number of transitions from state $i$.  

We can then make similar calculations for $o_t = 0$. For the first term, we have that 
\begin{eqnarray}
\label{eq:os_2}
    & &\Prob(s_{t+1} =  j\mid o_{t+1} = 0, \mathcal{H}_t) \nonumber \\
    &=&\begin{cases}
    n^{(t)}_{ij}/\sum_j n^{(t)}_{ij} , \ \ j\in \{1,\dots, \tilde{K}^{(t)}\} \\
    0 , \ \ \ \  \qquad j = \tilde{K}^{(t)}+1 .
    \end{cases}
\end{eqnarray}
The second term is $1 - \Prob(o_{t+1}=1\mid s_t = i, \mathcal{H}_t)$.

\begin{algorithm}[h]
\SetAlgoLined
\textbf{Input:} Data $\{X_t\}_{t=1}^\infty$ fed sequentially. Hyperparameters $\sigma^2$,kernel $K$, locations $\{z^{t}_{1:n^{(t)}}\}_t, \rho,\alpha,\gamma$.\\
 \textbf{Initialization:} \\
 For $t=1$, set $s_t = 1, o_t=1, n^{(1)} = [0], m^{(1)} = [1]$\\
 Update $\hat{\phi}_1$ using Proposition~\ref{prop:prediction_X}\\
 \textbf{Steps:} For each time $t = 1, 2, \dots, T$
 \begin{enumerate}
     \item Set $\hat{s}_{t+1} \xleftarrow{} \argmax_j \Prob(s_{t+1} = j \mid \mathcal{H}_{t}, x_{t+1})$ using \eqref{eq:s_post_summary}, \eqref{eq:prior} and Proposition~\ref{prop:prediction_X}.
     \item Set $\hat{o}_{t+1} \xleftarrow{} \argmax_e \Prob(o_{t+1}=e \mid x_{t+1}, s_{t+1}, \mathcal{H}_t)$, using Eqs.\eqref{eq:o_post_summary}, \eqref{eq:os_1},\eqref{eq:o_1} and \eqref{eq:os_2}. 
     \item Update $n^{(t)}_{\widehat{s}_{t}, \widehat{s}_{t + 1}}, m^{(t)}_{\widehat{s}_{t + 1}}$.
     \item Update estimates for $\phi_{s_{t+1}}$ using the GP posterior discussed in (U.1) at the end of Section \ref{ssection:patterns}.
 \end{enumerate}
 \textbf{Output:} $\widehat{s}_t$ and $\widehat{o}_t$ for $t = 1, 2, \ldots, T$ and $\hat{\phi_k}$ for $k = 1, 2, \ldots, K_0$
 \caption{Sequential maximum a posteriori estimation for Infinite HMM-GP}
 \label{alg 1}
\end{algorithm}

These estimates, $\widehat{s}_{t + 1}$ and $\widehat{o}_{t + 1}$, are then used to update the $\phi_{\widehat{s}_{t + 1}}$ and previous count variables. We describe how to efficiently sequentially update $\phi_k$ in the next section. 

\subsection{Estimating underlying patterns }
\label{ssection:patterns}
We now discuss the estimation of underlying patterns $\phi_1,\phi_2,\ldots$ at time-step $t+1$. 

\textbf{Notations:}
\begin{enumerate}
    \item[(P.1)] Let $T_j^t=\{t^j_1, t^j_2, \ldots, t^j_{j_t}\}$, $j_t\in \mathbbm{N}$, indicate all the times during for which $\widehat{s}_{t'} = j$ for $t' \leq t$.
    \item[(P.2)] Let $x_t=\text{vec}(X_t(z_{1:n^{(t)}}))$ be the $n^{(t)}d$ dimensional vector of observations at time $t$.
    \item[(P.3)] Moreover, let $\text{vec}\left((x^{t^j_k})\mid_{k=1:j_t}\right)$ be the $Nd$ ($N$ is the total number of locations among elements of $T_j^t$) dimensional vector obtained by stacking $x_{t^j_{k}}$, $k=1:j_t$, on top of one another.  
    \item[(P.4)] Let $Z_{t}^j$ denote the collections of all the locations for observations across time-points in $T_j^t$. Then, let $K(Z_{t}^j,Z_{t}^j)$ denote the matrix $(K(z_i,z_j))_{z_i,z_j \in Z_{t}^j}$. Similarly, define $K(z,Z_{t}^j)$,$K(Z_{t}^j,z)$ and $K(z,z)$ for any $z \in \mathbb{R}^p$.
\end{enumerate}

By the assumption of MRGP we have that,
\begin{eqnarray}
    & &\text{vec}\left((x^{t^j_k})\mid_{k=1:j_t}\right)\sim \\ & & 
    \hspace{0.5 em} \mathcal{N}_{Nd}\left({0}, K(Z_{t}^j,Z_{t}^j)\otimes \Omega(\rho) + I_N \otimes (\sigma^2 I_d)\right) \nonumber.
    \label{eq:multi_response_gp_l}
\end{eqnarray}
Based on this, we can use the conditional normal distribution to update $\phi_j$ given $\hat{s}_{t+1}=j$ as follows.
 
\begin{prop}
\label{prop:prediction_X}
Given notations in (P.1)-(P.4) and $\hat{s}_{t^j_1}, \ldots, \hat{s}_{t^j_{j_t}}=j$, we have that for any $z \in \mathbb{R}^p$.
\begin{eqnarray}
    & &\text{vec}(\phi_j(z)) | x_{t^j_1}, \ldots, x_{t^j_{j_t}},\hat{s}_{t^j_1}, \ldots, \hat{s}_{t^j_{j_t}}=j\nonumber \\ &\sim &\mathcal{N}_{Nd}\left(\mu^*, \Sigma^*\right)
\end{eqnarray}
where
\begin{align}
\label{eq:Posterior_mean_var}
    \Lambda &= \left(K(Z_{t}^j,Z_{t}^j)\otimes \Omega(\rho) + I_N\otimes \sigma^2 I_d\right)^{-1}, \\
    \mu^* &= \left(K(z,Z_{t}^j)\otimes \Omega(\rho)\right)\Lambda \text{vec}\left((x^{t^j_k})\mid_{k=1:j_t}\right), \nonumber\\
    \Sigma^* &= K(z,z)\otimes\Omega(\rho) - \nonumber\\ &\quad\left(K(z,Z_{t}^j)\otimes \Omega(\rho)\right)\Lambda\left(K(Z_{t}^j,z)\otimes \Omega(\rho)\right). \nonumber
\label{sigma}
\end{align}
The posterior predictive distribution of $X_{t+1}(z^{t+1}_{1:n^{(t+1)}})$ is then simply
\begin{eqnarray}
    & & X_{t+1}(z^{t+1}_{1:n^{(t+1)}})|x_{t^j_1}, \ldots, x_{t^j_{j_t}} \nonumber \\ &\sim& \mathcal{N}_{n^{(t+1)}d}\left(\mu^*, \Sigma^* + I_{n^{(t+1)}}\otimes \sigma^2 I_d\right)
\end{eqnarray}
with $z=z^{t+1}_{1:n^{(t+1)}}$ in Eq.~\eqref{eq:Posterior_mean_var}.
\end{prop}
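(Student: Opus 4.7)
The plan is to derive both the posterior of $\phi_j(z)$ and the posterior predictive of $X_{t+1}$ from a single joint Gaussian model, then invoke the standard conditional normal formula. The key simplification is that, after fixing $\hat{s}_{t^j_k}=j$ for $k=1,\dots,j_t$, the observations associated with state $j$ decouple from those of the other states (the $\phi_k$'s being a priori independent across $k$), so the only latent quantity appearing in the posterior is $\phi_j$, and the only data are the $x_{t^j_k}$'s indexed by $T_j^t$.

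First I would assemble the joint distribution. By the MRGP prior on $\phi_j$, the $(N+1)$-location vector obtained by adjoining $z$ to $Z_t^j$ produces a mean-zero matrix normal whose vectorization is Gaussian with a covariance built as a Kronecker product of the kernel matrix across locations and the equicorrelation matrix $\Omega(\rho)$ across output dimensions. The observation model in Eq.~\eqref{eq:noise} then gives
\[
\text{vec}\bigl((x^{t^j_k})|_{k=1:j_t}\bigr) \;=\; \text{vec}\bigl(\phi_j(Z_t^j)\bigr) + \varepsilon,\qquad \varepsilon \sim \mathcal{N}_{Nd}(0,\, I_N \otimes \sigma^2 I_d),
\]
with $\varepsilon$ independent of $\phi_j$. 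This yields a jointly Gaussian pair whose covariance blocks are
\begin{align*}
\Sigma_{YY} &= K(Z_t^j, Z_t^j) \otimes \Omega(\rho) + I_N \otimes \sigma^2 I_d, \\
\Sigma_{XY} &= K(z, Z_t^j) \otimes \Omega(\rho), \qquad \Sigma_{XX} = K(z,z) \otimes \Omega(\rho),
\end{align*}
with no noise term in $\Sigma_{XX}$ or $\Sigma_{XY}$ since $\phi_j(z)$ is latent.

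Next I would apply the textbook conditional-normal identity
\[
X \mid Y=y \;\sim\; \mathcal{N}\bigl(\Sigma_{XY}\Sigma_{YY}^{-1} y,\; \Sigma_{XX} - \Sigma_{XY}\Sigma_{YY}^{-1}\Sigma_{YX}\bigr),
\]
setting $\Lambda := \Sigma_{YY}^{-1}$. Reading off the two sides reproduces the $\mu^*$ and $\Sigma^*$ in Eq.~\eqref{eq:Posterior_mean_var} verbatim. The posterior predictive claim then follows in one step: since $X_{t+1}(z^{t+1}_{1:n^{(t+1)}}) = \phi_j(z^{t+1}_{1:n^{(t+1)}}) + \varepsilon_{t+1}$ with $\varepsilon_{t+1} \sim \mathcal{N}(0,\, I_{n^{(t+1)}} \otimes \sigma^2 I_d)$ independent of the past data, the predictive mean equals $\mu^*$ evaluated at $z = z^{t+1}_{1:n^{(t+1)}}$, and the predictive covariance is $\Sigma^*$ plus $I_{n^{(t+1)}} \otimes \sigma^2 I_d$.

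The main obstacle is really just bookkeeping: keeping the vec convention consistent so that the kernel factor $K(\cdot,\cdot)$ and the across-dimension factor $\Omega(\rho)$ appear in the correct Kronecker order throughout, and in particular verifying that in the cross-covariance block the observation noise drops out and only the pure MRGP covariance survives. Once this Kronecker structure is locked in, the derivation reduces mechanically to Gaussian conditioning plus independent noise injection.
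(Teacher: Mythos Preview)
Your proposal is correct and follows essentially the same route as the paper: form the joint Gaussian of $\text{vec}((x^{t^j_k})|_{k=1:j_t})$ and $\phi_j(z)$ with the block covariance $K_x = K(Z_t^j,Z_t^j)\otimes\Omega(\rho)+I_N\otimes\sigma^2 I_d$, $K_{x,\phi}=K(z,Z_t^j)\otimes\Omega(\rho)$, $K_\phi=K(z,z)\otimes\Omega(\rho)$, then apply the conditional normal formula and add the independent noise for the predictive. Your write-up is in fact more explicit than the paper's own proof about why the cross-block is noise-free and why only state-$j$ data enter, but the argument is the same.
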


Note that to compute $\Lambda$, which is central to calculating $\mu^*$ and $\Sigma^*$, we need to estimate $\rho$ and invert the matrix $(K(z,z)\otimes \Omega(\rho) + I_N\otimes \sigma^2 I_d)$. The latter can be challenging because the matrix is a large, growing matrix. It is an $Nd \times Nd$ matrix and we need to do this at every time step $t$. 


\begin{enumerate}
    \item[(U.1)]The update for $\phi_j(z) | x_{t^j_1}, \ldots, x_{t^j_{j_t}},x_{t+1},\hat{s}_{t^j_1}, \ldots, \hat{s}_{t^j_{j_t}},\\ \hat{s}_{t+1}=j$ in Step 4 of Algorithm~\ref{alg 1} can be computed using the first part of Proposition~\ref{prop:prediction_X}.
\end{enumerate}

Fortunately, we have methods to do both efficiently and sequentially. A key element in the speed up of Algorithm~\ref{alg 1} is fast computation of the matrix inverse in Eq.~\eqref{eq:Posterior_mean_var}. This is carried out as follows. Assume an estimate of $\rho$ as $\rho^{(1)}$ and estimate $\Lambda^{(1)}$. Then, we sequentially estimate $\rho^{(t)}$ and $\Lambda^{(t)}$ by breaking $\Lambda^{(t)}$ into $2 \times 2$ diagonal blocks and use the Schur complement of the block matrix. Since the previous steps store the values of $\Lambda^{(t - 1)}$, the Schur complement needs only compute the block matrix computations relative to the new data points at time $t$. This leads to a massive speed-up in computation and is highlighted in the appendix. An efficient, moment-matching approach to estimate $\rho$ is also discussed there.

\section{Experimental Results}
\label{section:experiments}
In this section we describe the experimental findings of our model and algorithm. We demonstrate the application of our model on simulated multi response data, compare it with the benchmark DP-GP model~\citep{guo2019modeling_dpgp}, and show that it succeeds in both learning the number of hidden Markov states and the transition dynamics. Then, we describe our experiment with real-world traffic data.

\subsection{Simulation Results}
We simulated a dataset using 8 smooth functions $f_1,\dots, f_8$ where each $f_i:\mathbb{R}^2 \to \mathbb{R}^2$. The true functions are shown in the \textbf{appendix}. We also generated a stochastic $8\times 8$ matrix in which each row was generated from a symmetric Dirichlet distribution. We let a Markov chain, $\{s_t\}$, run on the state space $\{1,\dots,8\}$ for $t=1,2,\dots,100$. At each time $t$, we generated $n_t \sim \text{Poi}(100)$ spatial points, $z^{(t)}_1,\dots,z^{(t)}_{n_t} \in [-2,2]\times[-2,2]$. Then, based on $s_t$, we generated $x^{(t)}_1,\dots,x^{(t)}_{n_t}$ as $x^{(t)}_j = f_{s_t}(z^{(t)}_j) + \epsilon$. Here, the $\epsilon$ are independent zero-mean Gaussian random variables with standard deviation, $\sigma=1$. The observed data were $\{(z^{(t)}_i, x^{(t)}_i); i=1,\dots,n_t\}_{t=1}^{100}$. 

\begin{figure}[!t]
\centering     
\subfigure[]{\label{fig:a}\includegraphics[width=40mm]{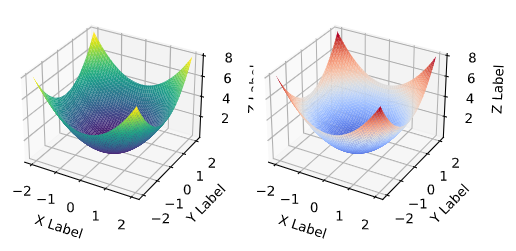}}
\subfigure[]{\label{fig:b}\includegraphics[width=40mm]{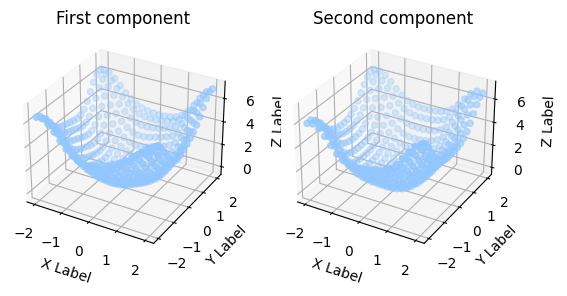}}

\subfigure[]{\label{fig:a}\includegraphics[width=40mm]{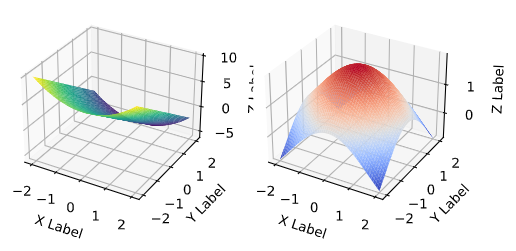}}
\subfigure[]{\label{fig:b}\includegraphics[width=40mm]{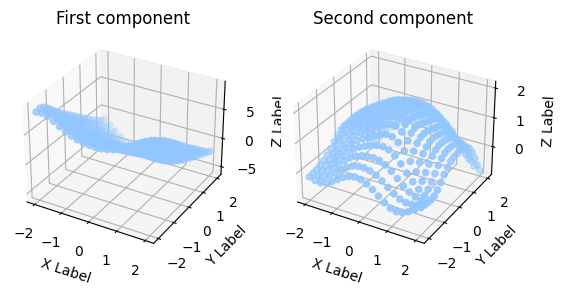}}
\subfigure[]{\label{fig:a}\includegraphics[width=40mm]{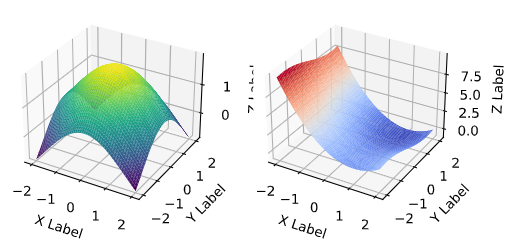}}
\subfigure[]{\label{fig:b}\includegraphics[width=40mm]{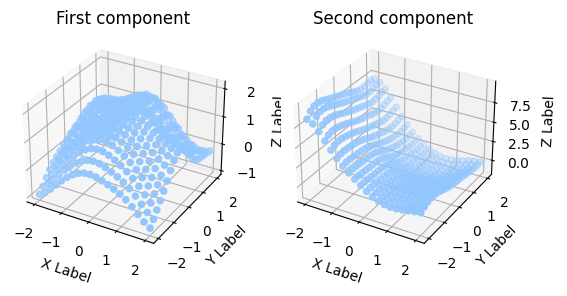}}
\caption{Simulation data: Left column shows the 3 of the true 8 functions (each from $[-2,2]\times [-2,2]\to \mathbb{R}^2)$ while the right column gives the corresponding estimated clusters.\\}
\label{figure}

\centering
\begin{tabular}{||c c c c||} 
 \hline
 $\sigma$ & $K$ & log-lik & time \\ [0.5ex] 
 \hline\hline
 0.2 & 100 & -174603.29 & 139.9 \\ 
 0.5 & 25 & -45290.95 & 126.8 \\
 \textbf{1} & \textbf{8} & \textbf{-29936.35} & \textbf{118.2} \\
 2 & 7 & -36488.81 & 125.0 \\
 5 & 5 & -51801.47 & 195.7 \\ [1ex] 
 \hline
\end{tabular}
\captionof{table}{Performance of Infinite HMM-GP on simulated data.\\}
\label{table:1}
\centering
\begin{tabular}{||c c c||} 
 \hline
 $\alpha$ & $K$  & time (3 iterations of MCMC) \\ [0.5ex] 
 \hline\hline
 0.2019 & 1 & 196.64 \\ [1ex] 
 \hline
\end{tabular}
\captionof{table}{Performance of DP-GP on simulated data.}
\label{table:2}
\end{figure}

To fit our model, we estimated the kernel parameters $\sigma_0, \ell_0$ by using the \textit{GPy} package on the data for the first time point. With $\alpha=\gamma=1$, we ran our algorithm in parallel for various values of $\sigma$. The results from various runs of the algorithm are shown in Table \ref{table:1}. Each row in the table shows for a particular $\sigma$, the number of clusters identified ($K$), the final log likelihood of the model (log-lik), and the time in seconds needed to run the algorithm (time). This is a sensible choice because not only are the correct number of clusters identified, the clusters' posterior mean functions are similar to the functions used to generate the data. Table \ref{table:1} also highlights the speed of the algorithm. The algorithm took around 2 minutes for this data set of 10,000 total observations.

Our experiments also demonstrate the inadequacy of DP-GP for this type of data, which underestimates the number of true clusters, and is time-consuming. Figure 1 in the appendix lists the true and estimated clusters for our model.

\subsection{Velocity fields in an LA boulevard}

\begin{figure}[!t]
\centering
\includegraphics[width=0.5\textwidth]{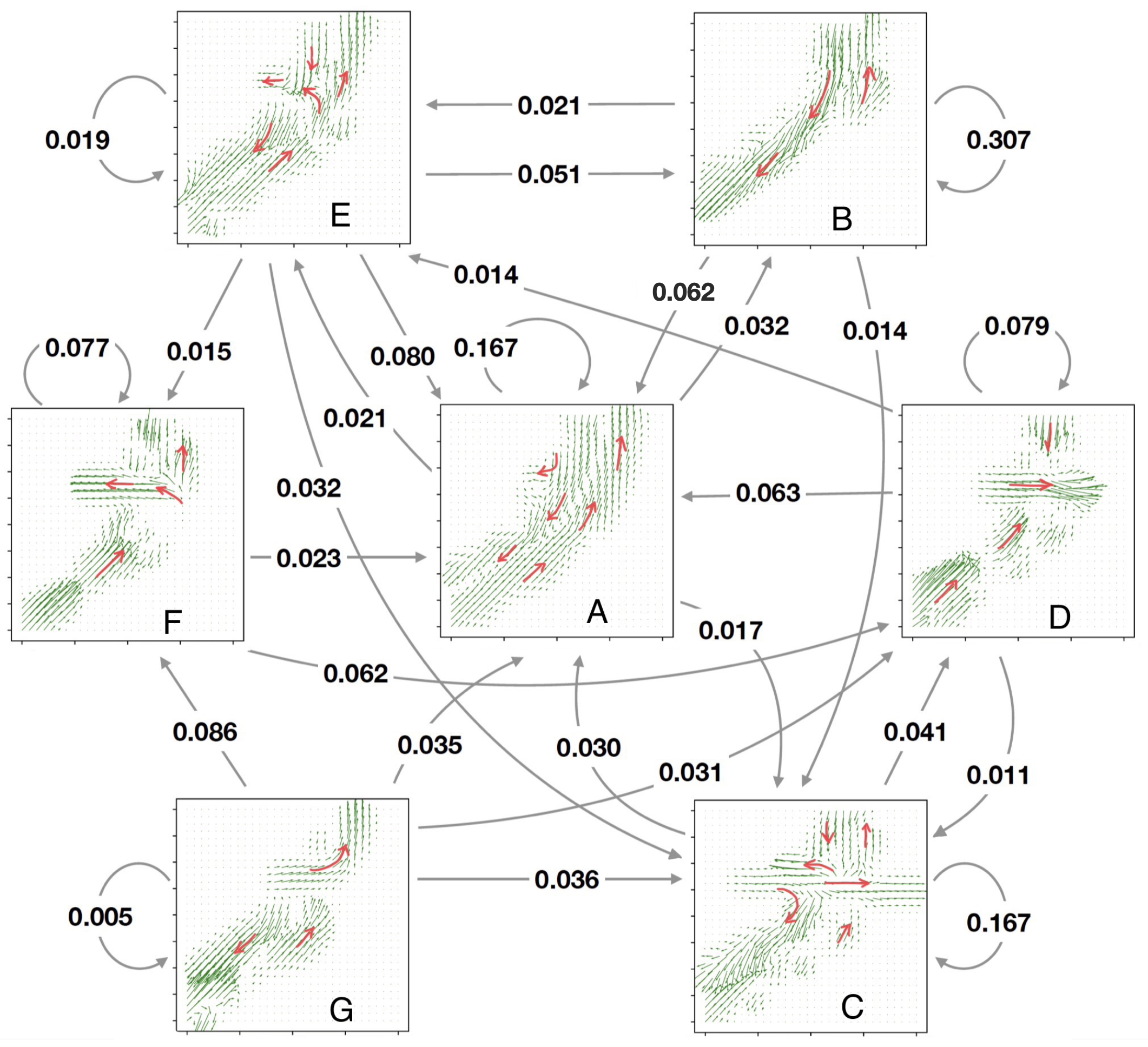}
\caption{Some of the common states along with the estimated 20 step transition probabilities (each transition corresponds to a time gap of 10 seconds). While not generated by the algorithm, the red arrow marks are provided to easily visualize the traffic patterns.}
\label{MC}

\centering     
\subfigure[]{\label{fig:a}\includegraphics[width=40mm]{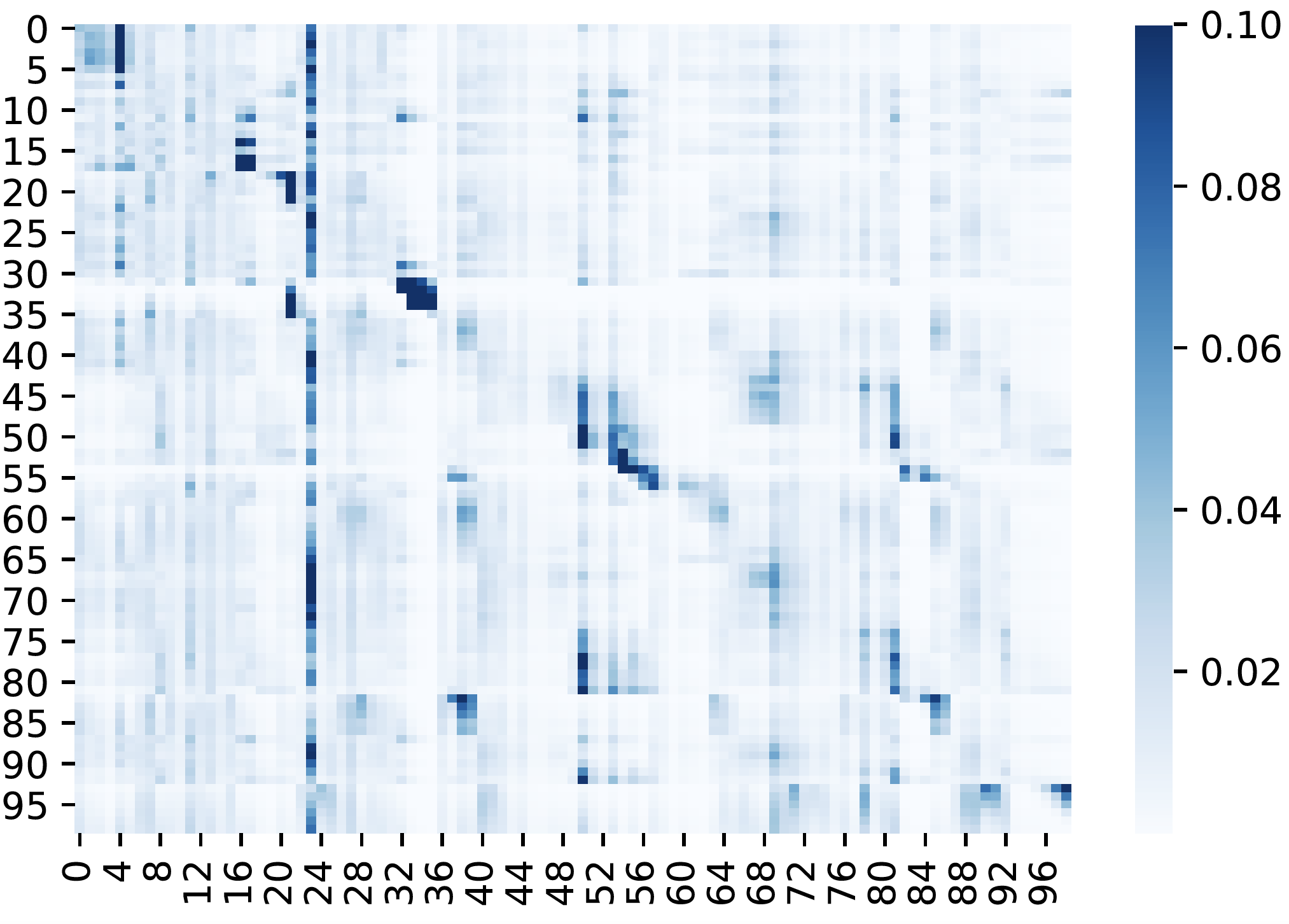}}
\subfigure[]{\label{fig:b}\includegraphics[width=40mm]{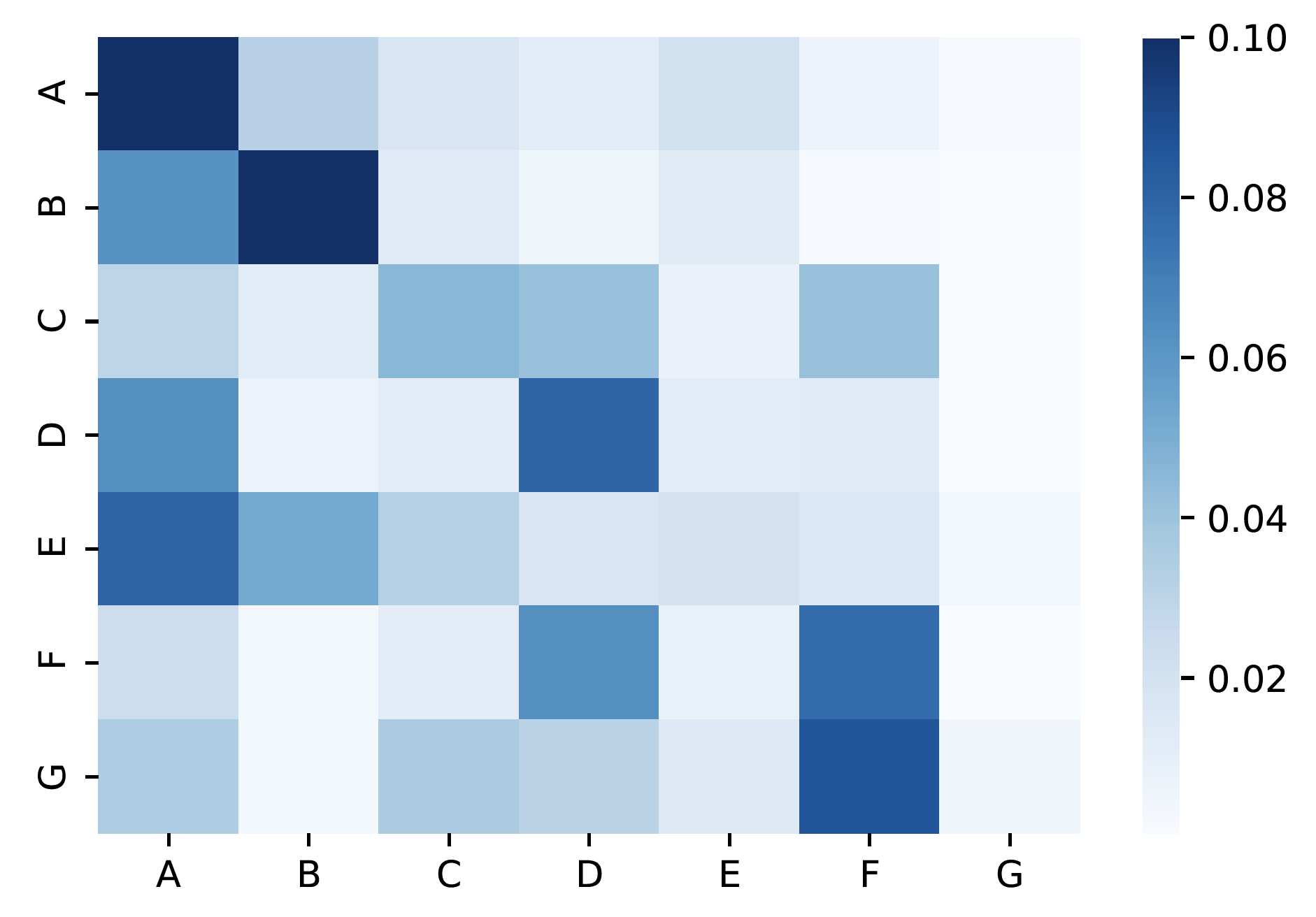}}
\caption{Heatmaps showing the estimated 20-step transition matrix for all hidden states in (a) and for the states considered in figure \ref{MC} in (b)}
\label{heatmap}
\end{figure}

We chose a real-world traffic dataset collected as part of Federal Highway Administration's (FWHA) Next Generation SIMulation (NGSIM) project. The dataset contains detailed multi-vehicle trajectories. As seen in Figure \ref{real}, we focused on data from the intersection of Lankershim Boulevard and Universal Hollywood Dr. in Los Angeles.

After scaling the region into a $[0,1]\times [0,1]$ box, we then discretized this data into \textit{frames} with a duration of 0.5 seconds. Each frame contains the cars' spatial location and velocity separated into the x and y component during that time period. For this study, we took $T=1000$ consecutive frames, which corresponds to roughly 8 minutes. We applied our model and algorithm to extract the latent traffic velocity spatial patterns while also studying the temporal dynamics.

We fixed the infinite HMM hyperparameters at $\alpha=1, \gamma=1$. After we tried different hyperparameters, we found that the log likelihood was highest when we set $\sigma = 0.15$ and $\sigma_0 = \ell_0 = 0.1$. Figure \ref{MC} shows the 7 most commonly occurring patterns that the algorithm identified with this choice of hyperparameters. These clusters are notated so A is the most frequent, B is the second most frequent, and so on. Interestingly enough, it appears that these patterns can be explained by the traffic lights at the intersections. For instance, there appears to be a green light on one or both sides of the vertical road in scenarios A, B, and E. In particular, scenario B differs from the other two because the green light is only for cars coming down the road. This figure also shows the importance of time dynamics. As seen by the moderate transition probabilities from E to C and from C to D and then moving to A, we can see that C is a transitional phase between patterns E (cars turning left from the main highway) and D (cars moving primarily left to right). Scenarios D and E cannot occur together. 

It is also worth mentioning that some of the patterns discovered had a seemingly implausible traffic flow (towards the bottom right) and was later found that the data contained such apparent irregularities. However, upon careful inspection of the map, it was found that there is a driveway in that part of the road towards the right. This explains the apparent discrepancy because cars moving into the driveway would have a velocity directed towards the driveway. It further shows that such models can capture these subtle movements and patterns to give an unsupervised learning about the geography of the roads and the associated traffic patterns. On the other hand, the DP-GP identifies only 13 traffic patterns. It is exciting that we are able to extract such meaningful traffic patterns and understand the temporal dynamics in a fast efficient manner.






\begin{table}[h!]
\centering
\begin{tabular}{|| c | c c c c||} 
 \hline
 $\sigma_0=\ell_0$ & $\sigma$ & $K$ & log-lik & time \\ [0.5ex] 
 \hline\hline
 \multirow{5}{*}{\textbf{0.1}} & 0.1 & 166 & -26461.12 & 3848.9 \\ 
  & \textbf{0.15} & \textbf{99} & \textbf{-2790.70} & \textbf{4467.4} \\
  & 0.2 & 66 & -3878.23 & 5254.7  \\
  & 0.25 & 41 & -12989.42 &  7619.7\\
  & 0.3 & 32 & -23212.65 & 9956.6 \\ [1ex] 
 \hline
\end{tabular}
\caption{Performance of Infinite-HMM-GP on the NGSIM data}
\label{table:3}
\end{table}

\begin{table}[h!]
\centering
\begin{tabular}{||c c c||} 
 \hline
 $\alpha$ & $K$ & time ( 3 iterations)\\ [0.5ex] 
 \hline\hline
 1.8099 & 13 & 4460.7 \\  [1ex] 
 \hline
\end{tabular}
\caption{Performance of DP-GP on the NGSIM data}
\label{table:4}
\end{table}

\section{Conclusion}
We propose a nonparametric framework to model the temporal and spatial aspects of traffic velocity vector field data via the use of Infinite HMM and Gaussian process. Additionally, we provide a fast, efficient sequential, one-pass algorithm for inference that performs MAP estimates of key variables at each step. The model allows us to have a better understanding of traffic movements and reveals interesting temporal patterns in traffic movement that are not captured by other nonparametric models. While the applications for this paper focuses primarily on traffic data, the techniques developed in the paper are also applicable to analyzing more generic spatio-temporal datasets in a fast, efficient manner.


\nocite{langley00}

\bibliographystyle{icml2021}
\bibliography{references,main,Aritra}

\begin{thebibliography}{21}
\providecommand{\natexlab}[1]{#1}
\providecommand{\url}[1]{\texttt{#1}}
\expandafter\ifx\csname urlstyle\endcsname\relax
  \providecommand{\doi}[1]{doi: #1}\else
  \providecommand{\doi}{doi: \begingroup \urlstyle{rm}\Url}\fi

\bibitem[Antoniak(1974)]{Antoniak-74}
Antoniak, C.
\newblock Mixtures of dirichlet processes with applications to bayesian
  nonparametric problems.
\newblock \emph{Annals of Statistics}, 2\penalty0 (6):\penalty0 1152--–1174,
  1974.

\bibitem[Beal et~al.(2002)Beal, Ghahramani, and Rasmussen]{beal2002infinite}
Beal, M.~J., Ghahramani, Z., and Rasmussen, C.~E.
\newblock The infinite hidden markov model.
\newblock In \emph{Advances in neural information processing systems}, pp.\
  577--584, 2002.

\bibitem[Blei \& Jordan(2006)Blei and Jordan]{Blei-VI-DPMM}
Blei, D. and Jordan, M.
\newblock Variational inference for dirichlet process mixtures.
\newblock \emph{Bayesian Analysis}, 1:\penalty0 121--144, 2006.

\bibitem[Blei et~al.(2003)Blei, Ng, and Jordan]{Blei-et-al}
Blei, D., Ng, A., and Jordan, M.
\newblock Latent {D}irichlet allocation.
\newblock \emph{J. Mach. Learn. Res}, 3:\penalty0 993--1022, 2003.

\bibitem[Chen et~al.(2016)Chen, Liu, Liu, Miller, and How]{Pedestrian_BNP}
Chen, Y.~F., Liu, M., Liu, S., Miller, J., and How, J.
\newblock Predictive modeling of pedestrian motion patterns with bayesian
  nonparametrics.
\newblock \emph{AIAA 2016-1861}, 2016.

\bibitem[Cressie(1993)]{Cressie-93}
Cressie, N.
\newblock \emph{Statistics for Spatial Data}.
\newblock Wiley, NY, 1993.

\bibitem[Ferguson(1973)]{Ferguson-73}
Ferguson, T.
\newblock A {B}ayesian analysis of some nonparametric problems.
\newblock \emph{Ann. Statist.}, 1:\penalty0 209--230, 1973.

\bibitem[Foti et~al.()Foti, Xu, Laird, and Fox]{SVI-HMM}
Foti, N., Xu, J., Laird, D., and Fox, E.~B.
\newblock Stochastic variational inference for hidden markov models.

\bibitem[Fox et~al.(2009)Fox, Sudderth, Jordan, and Willsky]{Fox-etal-09}
Fox, E., Sudderth, E., Jordan, M.~I., and Willsky, A.
\newblock The sticky hdp-hmm: {B}ayesian nonparametric hidden {M}arkov models
  with persistent states.
\newblock Technical Report P-2777, MIT LIDS, 2009.

\bibitem[Fox et~al.(2011)Fox, Sudderth, Jordan, and Willsky]{Fox-etal-11}
Fox, E.~B., Sudderth, E.~B., Jordan, M.~I., and Willsky, A.~S.
\newblock A sticky hdp-hmm with application to speaker diarization.
\newblock \emph{Annals of Applied Statistics}, 5 : 2A:\penalty0 1020--1056,
  2011.

\bibitem[Gelfand \& Smith(1990)Gelfand and Smith]{Gelfand-MCMC}
Gelfand, A. and Smith, A.
\newblock Sampling-based approaches to calculating marginal densities.
\newblock \emph{Journal of the American Statistical Association}, 85
  (410):\penalty0 398–409, 1990.

\bibitem[Ghosal \& van~der Vaart(2017)Ghosal and van~der
  Vaart]{Ghosal-VDV-BNP-book}
Ghosal, S. and van~der Vaart, A.
\newblock \emph{Fundamentals of nonparametric Bayesian inference, vol. 44 of
  Cambridge Series in Statistical and Probabilistic Mathematics.}
\newblock Cambridge University Press, Cambridge, 2017.

\bibitem[Guo et~al.(2019)Guo, Kalidindi, Arief, Wang, Zhu, Peng, and
  Zhao]{guo2019modeling_dpgp}
Guo, Y., Kalidindi, V.~V., Arief, M., Wang, W., Zhu, J., Peng, H., and Zhao, D.
\newblock Modeling multi-vehicle interaction scenarios using gaussian random
  field.
\newblock \emph{arXiv preprint arXiv:1906.10307}, 2019.

\bibitem[Hoffman et~al.(2013)Hoffman, Blei, Wang, and
  Paisley]{hoffman2013stochastic}
Hoffman, M.~D., Blei, D.~M., Wang, C., and Paisley, J.
\newblock Stochastic variational inference.
\newblock \emph{Journal of Machine Learning Research}, 14\penalty0
  (1):\penalty0 1303--1347, May 2013.

\bibitem[Horn et~al.(1994)Horn, Horn, and Johnson]{horn1994topics}
Horn, R.~A., Horn, R.~A., and Johnson, C.~R.
\newblock \emph{Topics in matrix analysis}.
\newblock Cambridge university press, 1994.

\bibitem[Jordan et~al.(1999)Jordan, Ghahramani, Jaakkola, and
  Saul]{jordan1999introduction}
Jordan, M.~I., Ghahramani, Z., Jaakkola, T.~S., and Saul, L.~K.
\newblock An introduction to variational methods for graphical models.
\newblock \emph{Machine Learning}, 37\penalty0 (2):\penalty0 183--233, 1999.

\bibitem[Joseph et~al.(2011)Joseph, Doshi-Velez, Huang, and
  Roy]{joseph2011bayesian_dpgp}
Joseph, J., Doshi-Velez, F., Huang, A.~S., and Roy, N.
\newblock A bayesian nonparametric approach to modeling motion patterns.
\newblock \emph{Autonomous Robots}, 31\penalty0 (4):\penalty0 383, 2011.

\bibitem[Kim et~al.(2011)Kim, Lee, and Essa]{Kim:2011:Gaussian-Process}
Kim, K., Lee, D., and Essa, I.
\newblock Gaussian process regression flow for analysis of motion trajectories.
\newblock In \emph{Proceedings of IEEE International Conference on Computer
  Vision (ICCV)}. IEEE Computer Society, November 2011.

\bibitem[Mandt et~al.(2017)Mandt, Hoffman, and Blei]{mandt2017stochastic}
Mandt, S., Hoffman, M.~D., and Blei, D.~M.
\newblock Stochastic gradient descent as approximate {B}ayesian inference.
\newblock \emph{Journal of Machine Learning Research}, 18\penalty0
  (134):\penalty0 1--35, 2017.

\bibitem[Rabiner(1989)]{Rabiner}
Rabiner, L.
\newblock A tutorial on hidden markov models and selected applications in
  speech recognition.
\newblock \emph{Proceedings of the IEEE}, 77:\penalty0 257--–285, 1989.

\bibitem[Teh et~al.(2006)Teh, Jordan, Beal, and Blei]{Teh-etal-06}
Teh, Y., Jordan, M., Beal, M., and Blei, D.
\newblock Hierarchical {D}irichlet processes.
\newblock \emph{J. Amer. Statist. Assoc.}, 101:\penalty0 1566--1581, 2006.

\end{thebibliography}

\clearpage
\newpage
\appendix 
\section{Computations for multivariate response data}
\subsection{Proof of proposition \ref{prop:prediction_X}}

\begin{proof}
The proof of this proposition follows from the matrix normal assumption and the conditional normal formula. By the assumption we have
\begin{equation}
    \begin{pmatrix}
    \text{vec}\left((x^{t^j_k})\mid_{k=1:j_t}\right) \\
    \phi_j(z)
    \end{pmatrix}\, \sim \,\mathcal{N}_{d(N+1)}\Bigg(\boldsymbol{0}, \begin{bmatrix}
    K_x & K_{x,\phi} \\
    K_{x,\phi}^T & K_\phi
    \end{bmatrix}\Bigg),
\end{equation}
where
$$K_x = K(Z_{t}^j,Z_{t}^j)\otimes \Omega(\rho) + I_N\otimes \sigma^2 I_d$$
$$K_{x,\phi} = K(z,Z_{t}^j)\otimes \Omega(\rho)$$
$$K_{\phi} = K(z,z) \otimes \Omega(\rho)$$
The proposition follows by using the formula for the conditional distribution of Gaussian random variable. The proposition can be easily extended to cover the case where we try to determine the posterior of $\phi_j$ at any $z_1,\dots,z_m \in \mathbb{R}^p$, simply by vectorizing it and using the multivariate version of Lemma~\ref{lemma:univariate} in section~\ref{ssection:univariate}.

\end{proof}
\subsection{Update for state variable }
\paragraph{second term in RHS of Eq.11:}
Here, we provide the computation of the second term in the RHS of Eq.(11) in the main draft.
Let $t^j_1,\ldots,t^j_{j_t} \leq t$ denote all indices less than $t+1$ for which $s_t=j$. Then, 
\begin{eqnarray}
 & &\Prob(X_{t+1}(z^{t+1}_{1:n^{(t+1)}}) \mid s_{t+1}=j,\mathcal{H}_t) \nonumber \\
 &=&\int_{\Theta}  F(X_{t+1}(z^{t+1}_{1:n^{(t+1)}}); \phi_j)dG(\phi | \{X_u: u\in T\}) \nonumber\\
 &=&\Prob(X_{t+1}(z^{t+1}_{1:n^{(t+1)}}) \mid X_{t^j_1},\ldots,X_{t^j_{j_t}}).\label{eq: X_predictive}
\end{eqnarray}
Here, $G$ is the $MRGP(\mu,K,\rho)$ distribution. Moreover,
\begin{eqnarray}
      & &F(X_{t+1}(z^{t+1}_{1:n^{(t+1)}}); \phi_j)\nonumber \\  &=&\Prob(X_{t+1}(z^{t+1}_{1:n^{(t+1)}})\mid s_{t+1}=j, \{\phi_k\}). 
  \label{eq: complete_lik}
\end{eqnarray}

The last line of Eq.~\eqref{eq: X_predictive} is then calculated using Prop.~\ref{prop:prediction_X}.

\subsection{Update for oracle variable}
\begin{lemma}
 $X_{t+1}(z^{t+1}_{1:n^{(t+1)}})$ is independent of $o_{t+1}$ conditioned on $s_{t+1}$,  i.e.,
 \begin{eqnarray}
     & &\Prob(X_{t+1}(z^{t+1}_{1:n^{(t+1)}}) \mid s_{t+1} = j, o_{t+1}, \mathcal{H}_t) \nonumber\\ &=& \Prob(X_{t+1}(z^{t+1}_{1:n^{(t+1)}}) \mid s_{t+1} = j, \mathcal{H}_t)
 \end{eqnarray}
\end{lemma}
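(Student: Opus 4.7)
The plan is to exploit the factorization of the joint distribution implied by the graphical model, which makes the claim almost a direct consequence of the model specification. The key observation is that the oracle variable $o_{t+1}$ enters the generative story only through the mechanism that selects $s_{t+1}$ (namely Eqs.~(4)--(5)): once $s_{t+1}$ is fixed, $o_{t+1}$ is a parent of $s_{t+1}$ but has no other descendants among the variables of interest. The emission model, Eq.~(9), reads $X_{t+1}(z_n) \mid s_{t+1}, \{\phi_k\} \sim \mathcal{N}(\phi_{s_{t+1}}(z_n), \sigma^2 I)$ and does not involve $o_{t+1}$ at all.

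First, I would expand $\Prob(X_{t+1}(z^{t+1}_{1:n^{(t+1)}}) \mid s_{t+1} = j, o_{t+1}, \mathcal{H}_t)$ by integrating out $\phi_j$ (as in Eq.~(17) of the appendix):
\begin{align*}
\Prob(X_{t+1} \mid s_{t+1}=j, o_{t+1}, \mathcal{H}_t)
= \int F\bigl(X_{t+1}; \phi_j\bigr)\, dG\bigl(\phi_j \mid s_{t+1}=j, o_{t+1}, \mathcal{H}_t\bigr).
\end{align*}
The likelihood $F(X_{t+1}; \phi_j)$ is determined by Eq.~(9) and manifestly does not depend on $o_{t+1}$. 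So the only thing to verify is that the conditional distribution of $\phi_j$ is also free of $o_{t+1}$, i.e.\ $G(\phi_j \mid s_{t+1}=j, o_{t+1}, \mathcal{H}_t) = G(\phi_j \mid s_{t+1}=j, \mathcal{H}_t)$.

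To establish this, I would note that $\phi_j$ is a priori drawn from the MRGP prior independently of the infinite HMM hyperparameters $\alpha, \gamma$, hence independently of the $(s_u, o_u)$ process in the prior. Bayes' rule then gives $G(\phi_j \mid s_{t+1}=j, o_{t+1}, \mathcal{H}_t) \propto \Prob(s_{t+1}=j, o_{t+1} \mid \phi_j, \mathcal{H}_t)\, G(\phi_j \mid \mathcal{H}_t)$; but the transitions in Eqs.~(4)--(5) depend only on the count statistics contained in $\mathcal{H}_t$ and not on $\phi_j$, so $\Prob(s_{t+1}=j, o_{t+1} \mid \phi_j, \mathcal{H}_t) = \Prob(s_{t+1}=j, o_{t+1} \mid \mathcal{H}_t)$. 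This factor cancels after normalization, yielding the desired equality. Plugging back into the integral proves the lemma.

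The main (and essentially only) obstacle is bookkeeping about what $\mathcal{H}_t$ contains: since $\mathcal{H}_t$ already includes $\phi_{1:\tilde{K}^{(t)}}$, $s_{1:t}$, $o_{1:t}$, and all the count variables, one must be careful that the conditional independence between $\phi_j$ and $o_{t+1}$ genuinely holds given $\mathcal{H}_t$. A quick d-separation check on the graphical model (Fig.~2) confirms this — the only path from $o_{t+1}$ to $\phi_j$ passes through $s_{t+1}$, which is a collider that we are already conditioning on; nonetheless, since $X_{t+1}$ is a child of $\phi_j$ and $s_{t+1}$ only, conditioning on $s_{t+1}$ renders $X_{t+1}$ independent of $o_{t+1}$.
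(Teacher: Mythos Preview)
Your proof is correct and rests on the same observation as the paper's: the emission in Eq.~(9) depends only on $s_{t+1}$ and $\{\phi_k\}$, so once $s_{t+1}$ is fixed the oracle indicator $o_{t+1}$ carries no information about $X_{t+1}$. The paper's proof just runs Bayes' rule in the reverse direction---computing $\Prob(o_{t+1}\mid X_{t+1}, s_{t+1}, \mathcal{H}_t)$ and noting that the likelihood factor is free of $o_{t+1}$ and therefore cancels upon normalization---whereas you verify directly (and a bit more carefully, by integrating out $\phi_j$ and checking that its posterior is unaffected by $o_{t+1}$) that the conditional law of $X_{t+1}$ does not depend on $o_{t+1}$; the two arguments are equivalent.
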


\begin{proof}

First by Bayes' Rule,
\begin{eqnarray}
    & &\Prob(o_{t+1} =  e\mid X_{t+1}(z^{t+1}_{1:n^{(t+1)}}), s_{t+1}= j, \mathcal{H}_t)  \nonumber \\ 
    & &\propto \Prob(o_{t+1} =  e\mid s_{t+1}= j, \mathcal{H}_t)\nonumber\\ & & \hspace {2 em} \Prob(X_{t+1}(z^{t+1}_{1:n^{(t+1)}})\mid s_{t+1} = j, \mathcal{H}_t)
\end{eqnarray}
Since $\Prob(X_{t+1}(z^{t+1}_{1:n^{(t+1)}}) \mid s_{t+1} = j, \mathcal{H}_t)$ is free of $o_{t+1}$, it gets cancelled when normalized. Thus we obtain for every $j = \{1,\dots, \tilde{K}^{(t)}+1\}$ and $e\in \{0,1\}$,
\begin{align}
    \Prob(o_{t+1} =  e\mid X_{t+1}(z^{t+1}_{1:n^{(t+1)}}), s_{t+1}= j, \mathcal{H}_t) \\ \nonumber = \Prob(o_{t+1} =  e\mid s_{t+1}= j, \mathcal{H}_t)
    \label{oracle_post}
\end{align}
which reflects the fact that given $s_{t+1}$, $o_{t+1}$ does not depend on $X_{t+1}(z^{t+1}_{1:n^{(t+1)}})$. 
\end{proof}

\section{Prior literature}
In this section we briefly introduce the various tools used in the paper.
\subsection{Hierarchical Dirichlet Process}
The HDP is a bayesian non parametric prior which enables us to fit mixture model for each group in a grouped data while allowing the mixtures to share components. Suppose there are $J$ groups, then HDP is a distribution over a set of random probability measures over $(\Theta, \mathcal{B})$; one $G_j$ for the $j$th group and a global probability measure $G_0$.

Suppose the $j$th group consists of $n_j$ datapoints $(x_{j1},\dots, x_{jn_j})$. The HDP model is as follows:
\begin{align}
    G_0|\gamma, H &\sim DP(\gamma, H) \\
    G_j|\alpha, G_0 &\sim DP(\alpha, G_0) \quad ,j = 1,2\dots,J \\
    \theta_{ji}|G_j &\sim G_j \quad ,j=1,2\dots, J \\
    x_{ji}|\theta_{ji} &\sim F(.|\theta_{ji}) \quad ,i=1,\dots, n_j; j=1,\dots,J
\end{align}

The parameters include $\gamma$, $\alpha$ and $H$. $\theta$ are the latent factors in the model and $F$ is the kernel. $G_j$'s are conditionally independent given $G_0$ and given $G_j$, $\theta_{j1},\dots, \theta_{jn_j}$ are iid. To see how this model captures sharing of mixture components, we look at the stick breaking construction of the DP and find that $G_0$ is atomic and
$$G_0 = \sum_k \beta_k\delta_{\phi_k}$$
Also, by construction of $G_j$,
$$G_j = \sum_{k} \pi_{jk}\delta_{\phi_k}$$
which shows that the atoms of $G_j$ originate from those of $G_0$ (and are hence shared across groups). Thus identifying $\phi_k$ as the parameter for the $k$th mixture component, we find that each of the $J$ groups are modelled as mixture distributions with the same set of (countably infinite) mixture components, but have different mixing proportions, given by $\pi_j = (\pi_{jk})_{k=1}^\infty$.

\subsection{HDP-HMM}
This is the model described in section (7) of \citep{Teh-etal-06}. It uses the $\pi_j$ for both transitions and emissions.

As described in (3.1) in~\citep{Teh-etal-06}, the $\pi_j$ and the atoms $\theta_k$ can be generated equivalently as follows:
$$\beta|\gamma\sim GEM(\gamma) \quad\quad \pi_j|\alpha,\beta \sim DP(\alpha, \beta)\ \quad \theta_k|H \sim H$$
The $\pi_j=(\pi_{jk})_{k=1}^\infty$ denoted the mixture probabilities for the $j$th group over the atoms $\theta=(\theta_k)_{k=1}^\infty$.

The model discussed in (7) in the same paper extends this to the HDP-HMM model which is as follows: there are countably infinite states (each representing a mixture) and all the mixtures share the same atoms/components. The hidden state indicates the component with transition given by the corresponding row of $\pi$ (now a doubly infinite stochastic matrix) and the emission is given as before. In particular, the model consists of the following (note $\beta$ is a probability distribution over $\mathbb{N}$)
\begin{align*}
    \beta|\gamma &\sim GEM(\gamma) \\
    \pi_k|\alpha,\beta &\sim DP(\alpha,\beta) \\
    \theta_k|H &\sim H
\end{align*}
and the associated HMM is given by:
\begin{align*}
    s_t|s_{t-1},\{\pi_k\}_{k=1}^\infty &\sim \pi_{s_{t-1}} \\
    x_t|s_t, \{\theta_k\}_{k=1}^\infty &\sim F(.|\theta_{s_t})
\end{align*}

\subsection{Matrix Normal Distribution}
Let $X\in\mathbb{R}^{m\times n}$ be a random matrix valued random variable. We say that $X$ follows a matrix normal distribution ($MN_{mn}$ in short) with mean parameter $M\in\mathbb{R}^{m\times n}$ and scale parameters $U \in \mathbf{S}^m_{++}$ and $V\in\mathbf{S}^n_{++}$ if the pdf is 
\begin{eqnarray}
   & & p(X|M,U,V) \nonumber \\ &=& \frac{\exp\left(-\frac{1}{2}\text{tr}\left[(V^{-1}(X-M)^TU^{-1}(X-M)\right]\right)}{(2\pi)^{mn/2}|U|^{m/2}|V|^{n/2}}
\end{eqnarray}
We write this as
\begin{eqnarray}
& &X \sim MN_{mn}(M,U,V) \nonumber \\ &\iff& \text{vec}(X) \sim N_{mn}\left(\text{vec}(M), U \otimes V\right)
\end{eqnarray}
which establishes its connection with the multivariate normal distribution. Here $\text{vec}$ indicates vectorized form of the corresponding matrix (we define it as vector obtained by stacking the rows of the matrix on top of each other) and $\otimes$ is the Kronecker product.

We list a few properties of this matrix normal distribution which follow readily using the equivalent multivariate normal form.
\begin{enumerate}
    \item Mean: $\mathbb{E}(X) = M$ 
    \item Second order moments: \begin{align*}
    \mathbb{E}\left[(X-M)(X-M)^T\right] = U\text{tr}(V),\\ \mathbb{E}\left[(X-M)^T(X-M)\right] = V\text{tr}(U)
    \end{align*}
    \item For appropriate sized matrices $A,B,C$ we have
    \begin{align*}
        \mathbb{E}\left[XAX^T\right] &= U\text{tr}(A^TV) + MAM^T \\
        \mathbb{E}\left[X^TBX\right] &= V\text{tr}(UB^T) + M^TBM \\
        \mathbb{E}\left[XCX\right] &= VC^TU + MCM 
    \end{align*}
    \item If $X \sim MN_{mn}(M,U,V)$ then
    \begin{align*}
        X^T &\sim MN_{nm}(M^T,V,U) \\
        DXC &\sim MN_{rs}(DMC, DUD^T, C^TVC)
    \end{align*}
    where$D\in\mathbb{R}^{r\times m}$ has rank $r\leq m$ and $C\in\mathbb{R}^{n\times s}$ has rank $s\leq n$.
    \item Maximum likelihood estimation: Let $X_1,\dots,X_k \overset{iid}{\sim} MN_{mn}(M,U,V)$, then the MLE of $M$ has a closed form solution:
    $$\hat{M} = \frac{1}{k}\sum_{j=1}^k X_k$$
    However $U$ and $V$ do not have MLE in closed form but they satisfy:
    \begin{align*}
        U &= \frac{1}{kn}\sum_{j=1}^k (X_j-M)V^{-1}(X_j-M)^T \\
        V &= \frac{1}{km}\sum_{j=1}^k (X_j-M)^TU^{-1}(X_j-M)
    \end{align*}
    The estimates are positive definite if $k \geq \max \{m/n, n/m\} + 1$. Also they are identifiable upto a scalar multiple, i.e. $MN_{mn}(M,U,V) = MN_{mn}(M,sU,(1/s)V)$
\end{enumerate}

\section{Calculations for posterior computation}
In this section we consider the calculations for univariate response data. The results for the multivariate case follow similarly.
\subsection{Univariate response data}
\label{ssection:univariate}
Consider the case of univariate response data. i.e. every cluster is a function $\phi_j:\mathbb{R}^p \to\mathbb{R}$ and we place a usual Gaussian process prior on them. We also consider isotropic data in this section, i.e. for each time point we observe noisy observations around the true cluster functions at the same spatial locations $z_1,\dots,z_N\in\mathbb{R}^p$. We  describe computing the posterior predictive distribution of $\phi_k$ at time $t+1$ for some $t \in {1, 2 \ldots, T}$, $k \in {1, 2, \ldots, K^{(t)}}$, and $N \in \mathbbm{N}$.

Recall that given $s_t=j$, the observation at time $t$ are given by 
\begin{eqnarray}
    X_t(z_i) = \phi_j(z_i) + \epsilon_{ji},
\end{eqnarray}
$i=1,\dots,N$ and $\epsilon$'s are iid $\mathcal{N}(0,\sigma^2)$. 
The posterior computation for the latent function, $\phi_j$ can be obtained by the following lemma.

\begin{lemma}
\label{lemma:univariate}
Let $t_1, t_2, \ldots, t_{N_k^{(t)}}$, $N_k^{(t)} \in \mathbbm{N}$, indicate the times during which $\widehat{s}_{t'} = k$ for $t' \leq t$. Let us define $X_t = (X_t(z_1),\dots, X_t(z_N))^T\in\mathbb{R}^N$, $z=(z_1,\dots,z_N)$ and $\phi_k(z) = (\phi_k(z_1),\dots,\phi_k(z_N))^T$. Moreover, let $K$ be the $N\times N$ RBF kernel matrix over $(z_1,\dots,z_N)$, i.e. $K_{ij} = K(z_i,z_j)$ for $i,j \in \{1,\dots,N\}$. Then, the posterior of $\phi$ conditioned on the data is given by:
\begin{equation}
    \begin{aligned}
    \phi_k(z) &| X_{t_1}, X_{t_2}, \ldots, X_{t_{N_k^{(t)}}} \sim\\ 
    & \mathcal{N}_{N}\Big(K_{x,\phi}^T K_{x}^{-1}\boldsymbol{x}, K_\phi - K_{x,\phi}^T K_{x}^{-1} K_{x,\phi}\Big).
    \end{aligned}
\end{equation}
where $\boldsymbol{x}$ is the $NN_k^{(t)}$ length vector obtained by stacking $X_{t_1},\dots, X_{t_{N_k^{(t)}}}$ and
$$K_\phi = K, \quad K_{x,\phi} = \begin{bmatrix}
    K \\
    \vdots \\
    K
\end{bmatrix}_{N N^{(t)}_k \times N},$$
$$K_x = \begin{bmatrix}
K+\sigma^2I & K & \dots & K \\
K & K+\sigma^2I & \dots & K \\
\vdots & \vdots & \ddots & \vdots \\
K & K & \dots & K + \sigma^2I
\end{bmatrix}_{N N^{(t)}_k \times N N^{(t)}_k}.$$

Moreover, 
 
\begin{equation}
\label{eq:pred_uni}
    \begin{aligned}
    X_t &| \hat{s}_t=k, \mathcal{H}_{t-1} \sim\\ 
    & \mathcal{N}_{N}\Big(K_{x,\phi}^T K_{x}^{-1}\boldsymbol{x}, K_\phi - K_{x,\phi}^T K_{x}^{-1} K_{x,\phi} + \sigma^2 I_N\Big).
    \end{aligned}
\end{equation}

\end{lemma}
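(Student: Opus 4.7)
The plan is to obtain the claim as an instance of the standard Gaussian conditioning formula, after writing down the joint distribution of $\phi_k(z)$ and the stacked observation vector $\boldsymbol{x}$. First I would note that conditional on the pattern assignments $\widehat{s}_{t_1} = \cdots = \widehat{s}_{t_{N_k^{(t)}}} = k$, the observation equation $X_{t_i}(z_n) = \phi_k(z_n) + \epsilon_{t_i,n}$ with $\epsilon_{t_i,n}$ i.i.d.\ $\mathcal{N}(0,\sigma^2)$ implies, together with the zero-mean GP prior $\phi_k \sim GP(0,K)$, that $(\phi_k(z)^T,\boldsymbol{x}^T)^T$ is jointly Gaussian with mean zero. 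Thus only the covariance blocks need to be computed.

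Next I would compute the three blocks in turn. The block $K_\phi = \mathrm{Cov}(\phi_k(z))$ is immediate from the GP prior: it is the $N \times N$ matrix $K$. For the cross block $K_{x,\phi}$, observe that $\mathrm{Cov}(X_{t_i}(z_a), \phi_k(z_b)) = \mathrm{Cov}(\phi_k(z_a) + \epsilon_{t_i,a}, \phi_k(z_b)) = K(z_a,z_b)$ by independence of the noise and the GP, which when assembled over all $i$ yields the block-stacked matrix $[K;\ldots;K]^T$ of shape $NN_k^{(t)} \times N$ claimed in the statement. For the block $K_x = \mathrm{Cov}(\boldsymbol{x})$, the diagonal blocks each equal $K + \sigma^2 I_N$ because the noise is independent across the $N$ spatial sites and adds $\sigma^2 I_N$ to the GP covariance at a single time, while the off-diagonal blocks equal $K$ since the noise is independent across time points but the latent $\phi_k(z)$ is shared.

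Having identified the three covariance blocks, I would invoke the standard multivariate Gaussian conditioning identity: for jointly Gaussian $(U,V)$ with zero mean and covariance blocks $\Sigma_{UU}, \Sigma_{UV}, \Sigma_{VV}$, one has $U \mid V = v \sim \mathcal{N}(\Sigma_{UV}\Sigma_{VV}^{-1}v,\ \Sigma_{UU} - \Sigma_{UV}\Sigma_{VV}^{-1}\Sigma_{VU})$. Applying this with $U = \phi_k(z)$ and $V = \boldsymbol{x}$ yields exactly the posterior mean $K_{x,\phi}^T K_x^{-1}\boldsymbol{x}$ and posterior covariance $K_\phi - K_{x,\phi}^T K_x^{-1} K_{x,\phi}$ stated in the lemma. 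For the posterior predictive distribution \eqref{eq:pred_uni}, I would write $X_t = \phi_k(z) + \boldsymbol{\epsilon}_t$ with $\boldsymbol{\epsilon}_t \sim \mathcal{N}(0,\sigma^2 I_N)$ independent of the past; conditional on $\mathcal{H}_{t-1}$ and $\widehat{s}_t = k$, the two summands are independent Gaussians, so their sum is Gaussian with mean equal to the posterior mean of $\phi_k(z)$ and covariance equal to the posterior covariance plus $\sigma^2 I_N$.

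The plan is almost entirely routine: the only step with any subtlety is the bookkeeping that produces the block structure of $K_x$ correctly, in particular verifying that the off-diagonal blocks are $K$ rather than $0$ because the latent field $\phi_k$ is shared across all observation times in $T_k^t$. The multivariate version used in Proposition~\ref{prop:prediction_X} then follows by replacing scalar responses with $d$-vectors and Kronecker-ing with $\Omega(\rho)$, using exactly the same conditioning identity.
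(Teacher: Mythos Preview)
Your proposal is correct and follows essentially the same approach as the paper: write the joint Gaussian of $(\boldsymbol{x},\phi_k(z))$ with mean zero and the indicated block covariance, then apply the standard conditional-normal formula, and for \eqref{eq:pred_uni} add the independent noise $\sigma^2 I_N$. Your derivation is in fact more detailed than the paper's, which simply asserts the joint normal form and invokes the conditioning identity without spelling out the block computations.
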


\begin{proof}
By normality, we have that

\begin{equation*}
    \begin{pmatrix}
    X_{t_1} \\
    X_{t_2} \\
    \vdots \\
    X_{t_{N_k^{(t)}}} \\
    \phi_k(z)
    \end{pmatrix}\, \sim \,\mathcal{N}_{N(N_k^{(t)}+1)}\Bigg(\boldsymbol{0}, \begin{bmatrix}
    K_x & K_{x,\phi} \\
    K_{x,\phi}^T & K_\phi
    \end{bmatrix}\Bigg),
\end{equation*}

The result now follows by using the formulation of the conditional normal distribution. The proof of Eq.~\eqref{eq:pred_uni} also follows similarly.
\end{proof}

Notice that the kernel $K$ stays the same for every time point because $z_1,\dots,z_N$ are fixed in this case.

The inversion of matrix $K_x$ is computationally intensive as $K_x$ is $N N^{(t)}_k\times N N^{(t)}_k$. Moreover, $K_x$, which is $N N^{(t)}_k\times N N^{(t)}_k$ and grows as new points are included. Therefore, to circumvent the issue of inverting the matrix, we can instead use the spectral decomposition of $K$. This method is described in the following lemma.

\begin{lemma}
 Suppose that $K$ has eigenvalues $\lambda_1, \lambda_2, \ldots, \lambda_{N}$ with corresponding eigenvectors $v_1, v_2, \ldots, v_{N} \in \mathbbm{R}^{N}$. For $U \in \mathbbm{R}^{N N^{(t)}_k \times N}$, let $U = [u_1, u_2, \ldots, u_{N^{(t)}_k}]$ where $u_n = \mathbf{1}_{N^{(t)}_k} \otimes v_n/\sqrt{N^{(t)}_k}$, $n = 1, 2, \ldots, N$. Set $D$ to be a diagonal $N\times N$ matrix such that the $n$th diagonal entry is given by $-\frac{k\lambda_i}{\sigma^2(k\lambda_i + \sigma^2)}$. Then,
\begin{align}
    K_x^{-1} = UDU^T + \frac{1}{\sigma^2}I_{N N^{(t)}_k}.
\end{align}
\end{lemma}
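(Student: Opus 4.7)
The plan is to exploit the Kronecker-sum structure hiding in $K_x$. Writing $m = N_k^{(t)}$ for brevity and $J_m = \mathbf{1}_m \mathbf{1}_m^T$ for the $m \times m$ all-ones matrix, I would first verify the identification
$$K_x \;=\; J_m \otimes K \;+\; \sigma^2 I_{mN},$$
which holds block-by-block: the diagonal blocks evaluate to $K + \sigma^2 I_N$ (one $K$ from $J_m \otimes K$ plus $\sigma^2 I_N$ from the second term) while the off-diagonal blocks are simply $K$. This reduces the task to inverting a matrix of the form $\sigma^2 I + A$, where $A := J_m \otimes K$ is symmetric and positive semidefinite.

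Next I would spectral-decompose $A$ using standard Kronecker-product eigenvalue theory. Since $J_m$ has exactly one non-zero eigenvalue $m$ with unit eigenvector $\mathbf{1}_m/\sqrt{m}$, and $K = \sum_{n=1}^N \lambda_n v_n v_n^T$, the $N$ non-zero eigenvalues of $A$ are $m \lambda_n$ with corresponding orthonormal eigenvectors $u_n = (\mathbf{1}_m/\sqrt{m}) \otimes v_n$; this matches the lemma's definition of $u_n$ (with $k = m$), and orthonormality follows from $\langle u_n, u_{n'}\rangle = \|\mathbf{1}_m/\sqrt{m}\|^2 \langle v_n, v_{n'}\rangle = \delta_{nn'}$. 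The remaining $(m-1)N$ eigenvalues of $A$ vanish.

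Finally, I would invoke the elementary identity that for any symmetric $A$ whose non-zero spectral part is $\sum_{n} \mu_n u_n u_n^T$ on an orthonormal system $\{u_n\}$,
$$(\sigma^2 I + A)^{-1} \;=\; \frac{1}{\sigma^2} I \;+\; \sum_{n} \left(\frac{1}{\sigma^2 + \mu_n} - \frac{1}{\sigma^2}\right) u_n u_n^T \;=\; \frac{1}{\sigma^2} I \;-\; \sum_{n} \frac{\mu_n}{\sigma^2(\sigma^2 + \mu_n)}\, u_n u_n^T,$$
which one proves by diagonalizing $\sigma^2 I + A$ in any orthonormal basis extending $\{u_n\}$. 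Substituting $\mu_n = m\lambda_n = k \lambda_n$ and collecting the rank-one terms into the matrix product $U D U^T$, with $D$ diagonal and $D_{nn} = -k\lambda_n/[\sigma^2(k\lambda_n + \sigma^2)]$, yields the claimed identity $K_x^{-1} = U D U^T + \sigma^{-2} I_{mN}$.

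The only obstacle worth noting is purely notational bookkeeping: the symbol $k$ in the lemma stands for the count $N_k^{(t)}$ (not the cluster index), and $U$ must be read as the $mN \times N$ matrix whose columns are indexed by the eigenpairs of the $N \times N$ kernel $K$ rather than by time points. A sanity check on the smallest non-trivial case $m = 2$, $N = 1$, where $K_x$ is a $2 \times 2$ matrix invertible by hand, confirms that the signs and scaling in $D$ are correct.
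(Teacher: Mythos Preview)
Your proof is correct and follows essentially the same route as the paper: identify $K_x = (\mathbf{1}_m\mathbf{1}_m^T)\otimes K + \sigma^2 I$, read off the eigenpairs of the Kronecker product, then invert via the spectral decomposition (the paper writes this out by first extending $\{u_n\}$ to a full orthonormal basis, whereas you package the same step as a general identity for $(\sigma^2 I + A)^{-1}$). Your remark that the symbol $k$ in the diagonal entries of $D$ is really $N_k^{(t)}$ is also spot on.
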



\begin{proof}
We use the following result corresponding to Kronecker product (Theorem 4.2.12 in \citep{horn1994topics}) to ease this computation.

\begin{theorem}[Eigendecomposition: Kronecker Product]
Suppose $A\in\mathbb{M}^n$ and $B\in\mathbb{M}^m$. Let $\lambda$ be an eigenvalue of $A$ with corresponding eigenvector $x$ and $\mu$ be an eigenvalue of $B$ with corresponding eigenvector $y$. Then $\lambda\mu$ is an eigenvalue of $A\otimes B$ with corresponding eigenvector $x\otimes y$. Any eigenvalue of $A \otimes B$ arises as such a product of eigenvalues of A and B.
\label{thm:eigen}
\end{theorem}

Recall that $K_x = A + \sigma^2 I$ where $A=\boldsymbol{1}_{N^{(t)}_k}\boldsymbol{1}_{N^{(t)}_k}^T \otimes K$.

\textbf{Step 1:} We first provide the eigendecomposition of $K_x$. 

Consider the eigendecomposition of $A$. 

Let $K = \sum_{i=1}^N \lambda_i v_iv_i^T$ be the eigendecomposition of $K$ (note that $K$ is fixed throughout and this decomposition needs to be done once).

Also, $\boldsymbol{1}_{N^{(t)}_k}\boldsymbol{1}_{N^{(t)}_k}^T = N^{(t)}_k \frac{\boldsymbol{1}_{N^{(t)}_k}}{\sqrt{N^{(t)}_k}}\frac{\boldsymbol{1}_{N^{(t)}_k}}{\sqrt{N^{(t)}_k}}^T$ gives the corresponding decomposition for $\boldsymbol{1}_{N^{(t)}_k}\boldsymbol{1}_{N^{(t)}_k}^T$.

Thus the eigendecomposition of $A$ by Lemma \ref{thm:eigen} is 
$$A = \sum_{i=1}^N (N^{(t)}_k\lambda_i) u_i u_i^T \quad\text{where } u_i = \boldsymbol{1}_{N^{(t)}_k}\otimes v_i/\sqrt{N^{(t)}_k}$$
showing that $A$ has only $N$ non-zero eigenvalues with corresponding eigenvectors $u_1,\dots, u_N$. 

Now extend $\{u_1,\dots, u_N\}$ to an orthonormal basis of $\mathbb{R}^{N N^{(t)}_k}$, as $\{u_1,\dots, u_N, u_{N+1}, \dots, u_{N N^{(t)}_k})$ (e.g. Gram Schmidt). 

Then we can write $K_x$ as
\begin{eqnarray}
    K_x &=& A + \sigma^2 I = \sum_{i=1}^N (N^{(t)}_k \lambda_i) u_iu_i^T + \sum_{j=1}^{N N^{(t)}_k} \sigma^2 u_ju_j^T \nonumber \\ & =&  \sum_{i=1}^N (N^{(t)}_k\lambda_i+\sigma^2)u_iu_i^T + \sum_{i>N} \sigma^2 u_iu_i^T
\end{eqnarray}
which shows that $K_x$ has eigenvalues $N^{(t)}_k\lambda _1+\sigma^2,\dots,N^{(t)}_k\lambda_N+\sigma^2$ with multiplicity 1 and $\sigma^2$ with multiplicity $N N^{(t)}_k - N$.

\textbf{Step 2:} From the eigendecomposition of $K_x$, the decomposition for $K_x^{-1}$ is given by
\begin{align}
    K_x^{-1}  &= \sum_{i=1}^N \frac{1}{N^{(t)}_k\lambda_i+\sigma^2} u_iu_i^T + \sum_{i>N} \frac{1}{\sigma^2} u_iu_i^T \nonumber \\
    &= \sum_{i=1}^N \Bigg(\frac{1}{N^{(t)}_k\lambda_i+\sigma^2} - \frac{1}{\sigma^2}\Bigg) u_iu_i^T + \frac{1}{\sigma^2} I_{N N^{(t)}_k} \nonumber\\
    &= -\frac{1}{\sigma^2}\sum_{i=1}^N \Bigg(\frac{N^{(t)}_k \lambda_i}{N^{(t)}_k\lambda_i + \sigma^2}\Bigg) u_iu_i^T + \frac{1}{\sigma^2} I_{N N^{(t)}_k}
\end{align}

The first term can be written as $UDU^T$, where, 
$$D = \frac{1}{\sigma^2}\begin{bmatrix}
    -\frac{N^{(t)}_k\lambda_1}{N^{(t)}_k\lambda_1+\sigma^2} & 0 & \dots & 0\\
    0 & -\frac{N^{(t)}_k\lambda_2}{N^{(t)}_k\lambda_2+\sigma^2} & \dots & 0 \\
    \vdots & \vdots & \ddots & \vdots \\
    0 & 0 & \dots & -\frac{N^{(t)}_k\lambda_n}{N^{(t)}_k\lambda_n+\sigma^2}
\end{bmatrix}$$
This proves the result.
\end{proof}

During initialization, we spectral decompose $K$ to get $\lambda_1,\dots, \lambda_N$ and $v_1,\dots, v_N$ and store them. Then, when needed as in our discussion above, we construct the $N N^{(t)}_k\times N$ matrix $U = \begin{bmatrix}u_1 & u_2 & \dots & u_N\end{bmatrix} $ and  a $N\times N$ diagonal matrix $D$ and then use $K_x^{-1} = UDU^T + \frac{1}{\sigma^2}I_{N N^{(t)}_k}$ to get the required inverse.

Note that we can further speed up computation using the above decomposition because we only have to compute the eigenvalues of $K$ once. $K_x$ is a $N N^{(t)}_k\times N N^{(t)}_k$ matrix and its Cholesky decomposition has complexity  $O(N^3 (N^{(t)}_k)^3/3)$, which is quite large for $N^{(t)}_k\approx 10^4$ and $N\approx 100$ and considering that we need to do this for every time point $t$. 



\subsection{Multivariate response case}
\subsubsection{Efficient matrix inversion for the multivariate case}
The target is to update the matrix 
\[
    \Lambda^{(t)} = \left(K(Z_k^t,Z_k^t)\otimes \Omega(\rho) + \sigma^2 I_{Nd}\right)^{-1}
\]
in an online manner.

Recall that this is required for the computing the posterior predictive distribution for the $k$th cluster at every time point but we only need to invert this afresh every time some new data is assigned to this cluster. We assume that the dimension of the data is fixed whereas the number of total data points increases each time new data is given.

Our approach is the following. At the first time some data is given to $\phi_k$, we estimate $\rho$ using the method discussed in the previous section. Denote this estimate as $\rho^{(1)}$. If $K^{(1)}$ represents the RBF kernel of the GP over $Z_k^1$, we set $\Lambda^{(1)}$ so that
\[
\Lambda^{(1)} = \left(K^{(1)}\otimes \Omega(\rho^{(1)}) + \sigma^2 I_{n_1d}\right)^{-1}.
\]


Now suppose we assign new data to this cluster at times $t_1,\dots,t_{N_k^{(t)}}$ and each time we compute $\Lambda^{(1)}, \dots, \Lambda^{(N_k^{(t)})}$ (each time saving the last matrix). Now let at some later time point, $t'$, some new data is given to $\phi_k$. We have saved $\rho^{(N_k^{(t)})}$ and $\Lambda^{(N_k^{(t)})}$. Then, we again first use the new data to estimate $\rho^{(N_k^{(t)}+1)}$. Let $K(Z_t^k,Z_t^k)$ denote the RBF kernel with the old data till $t$, $K(z^{(t')},z^{(t')})$ represent the RBF kernel with the new data at $t'$, and $K(Z_t^k, z^{(t')})$ designate the RBF kernel computed between the old and new data. Also write $N=\sum_{s\leq N_k^{(t)}} n^{(t_s)}$ be the total number of data points associated with this cluster till time $t$. We can then write
\begin{equation}
\left(\Lambda^{(N_k^{(t)}+1)}\right)^{-1} = \begin{bmatrix} A_{11} & A_{12} \\ A_{21} & A_{22} \end{bmatrix},
\label{lambda}
\end{equation}
where
\begin{align*}
    A_{11} &= K(Z_t^k,Z_t^k)\otimes \Omega(\rho^{(N_k^{(t)} - 1)}) + \sigma^2 I_{N d},\\
    A_{12} &= K(Z_t^k,z^{(t')})\otimes \Omega(\rho^{(N_k^{(t)})}),\\
    A_{21} &= A_{12}^T,\\
    A_{22} &= K(z^{(t')},z^{(t')})\otimes \Omega(\rho^{(t')}) + \sigma^2 I_{n^{(t')} d}.\\
\end{align*}

Using the matrix inverse for block matrix, we have that
\begin{equation}
    \Lambda^{(N_k^{(t)}+1)} = 
    \begin{bmatrix}
    \Lambda^{(N_k^{(t)}+1)}_{11} & \Lambda^{(N_k^{(t)}+1)}_{12} \\
    \Lambda^{(N_k^{(t)}+1)}_{21} & \Lambda^{(N_k^{(t)}+1)}_{22}
    \end{bmatrix}
\end{equation}
with
\begin{align*}
\Lambda^{(N_k^{(t)}+1)}_{11} &= A_{11}^{-1} + A_{11}^{-1}A_{12}\Lambda^{(N_k^{(t)}+1)}_{22}A_{21}A_{11}^{-1}, \\
\Lambda^{(N_k^{(t)}+1)}_{12} &= -A_{11}^{-1}A_{12}\Lambda^{(N_k^{(t)}+1)}_{22}, \\
\Lambda^{(N_k^{(t)}+1)}_{21} &= -\Lambda^{(2)}_{22}A_{21}A_{11}^{-1},\\
\Lambda^{(N_k^{(t)}+1)}_{22} &= \left(A_{22} - A_{21}A_{11}^{-1}A_{12}\right)^{-1}.
\end{align*}
While we now need two inverses to compute $\Lambda^{(N_k^{(t)}+1)}$, updating it in this manner is computationally faster. We have already calculated $A_{11}^{-1}$ because $A_{11}^{-1} = \Lambda^{(N_k^{(t)})}$. Further, the other inverse is the inverse of a $n^{(t')}d \times n^{(t')}d$ matrix. As more data is added to a GP, it will be much faster to invert such a matrix compared to the matrix in \eqref{lambda} as a whole which has dimensions of  $(N+n^{(t')})d\times (N+n^{(t')})d$.

\subsubsection{Estimating $\rho$}
We apply a moment-matching approach to find a suitable plug-in estimator of $\rho$ for $\phi_j$ at time $t$. If $\bar{X}_t = \frac{1}{N^{(t)}_{k}} \sum_{t' = t_1, t_2, \ldots, t_{T_k}} X_{t'}$, our goal is to match the second moment from the MRGP assumption to $\frac{1}{Nd} (X_t - \bar{X}_t) (X_t-\bar{X}_t)^T$. Due to the form of $\Omega(\rho)$, we have after some rearrangement that $\rho A = B$ where
\begin{align*}
    A &= \left[ K\otimes \mathbf{1}_d\mathbf{1}_d^T - K \otimes I_d\right] \\
    B &= \frac{1}{N d}(X_t-\bar{X}_t) (X_t-\bar{X}_t)^T - \sigma^2 I_{Nd} - K \otimes I_d.
\end{align*}

Here $K=K(Z_t^j,Z_t^j)$ is the $N\times N$ covariance kernel over all the spatial locations associated to cluster $j$ till time $t$. We then choose our estimator, $\widehat{\rho}$, so as to minimize $\norm{\rho A - B}_F$. This has the following closed form solution:
\begin{equation}
    \widehat{\rho} = \frac{\sum_{i,j} A_{ij}B_{ij}}{\sum_{i,j} A_{ij}^2}
\end{equation}

\section{Experiments}
This section contains additional experimental results. 
Figure~\ref{fig:heat20} provides an illustration of the transition matrix for the NGSIM data. The transition matrix reflects the natural intuition that traffic flows tend to stay in the same state over a short duration of time. This is reflective of the fact that traffic lights may control the flow of traffic for a certain duration of time and then as traffic light directions change, so does the traffic flow pattern. On the other hand, movement patterns are quite flexible over large durations as shown by the 60-step transition matrix.

Figure~\ref{fig:clusters} shows the 8 true functions(on the left) that were used for the purpose of simulations along with their data estimates (on the right). By simple eyeballing, the estimates look to match the true functions closely.

Table~\ref{table:data} provides results with NGSIM data for various parameter settings with infinite HMM-GP.

\begin{table}[h!]
\centering
\begin{tabular}{|| c | c c c c||} 
 \hline
 $\sigma_0=\ell_0$ & $\sigma$ & $K$ & log-lik & time \\ [0.5ex] 
 \hline\hline
 \multirow{5}{*}{0.07} & 0.1 & 109 & -31487.76 & 5299.7 \\ 
  & 0.15 & 64 & -13808.98 & 7093.2 \\
  & 0.2 & 41 & -15939.11 & 10885.7 \\
  & 0.25 & 29 & -21262.44 & 18023.6 \\
  & 0.3 & 20 & -27805.92 & 22157.2 \\ [1ex] 
 \hline
 \multirow{5}{*}{\textbf{0.1}} & 0.1 & 166 & -26461.12 & 3848.9 \\ 
  & \textbf{0.15} & \textbf{99} & \textbf{-2790.70} & \textbf{4467.4} \\
  & 0.2 & 66 & -3878.23 & 5254.7  \\
  & 0.25 & 41 & -12989.42 &  7619.7\\
  & 0.3 & 32 & -23212.65 & 9956.6 \\ [1ex] 
 \hline
\multirow{5}{*}{0.3} & 0.1 & 343 & -127120.84 & 5112.8 \\ 
& 0.15 & 213 & -42825.24 & 4734.9 \\
& 0.2 & 138 & -25478.42 & 5775.6 \\
& 0.25 & 96 & -23204.13 & 6972.8 \\
& 0.3 & 74 & -28864.42 & 8752.1 \\ [1ex]
\hline
\end{tabular}
\caption{Table showing performance of our algorithm on the NGSIM data. The hyperparameters with the highest log likelihood is bolded.}
\label{table:data}
\end{table}

\begin{figure}
    \centering
    \centering     
    \subfigure[]{\label{fig:a}\includegraphics[width=40mm]{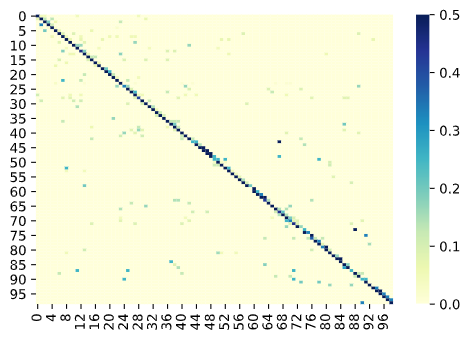}}
    \subfigure[]{\label{fig:b}\includegraphics[width=40mm]{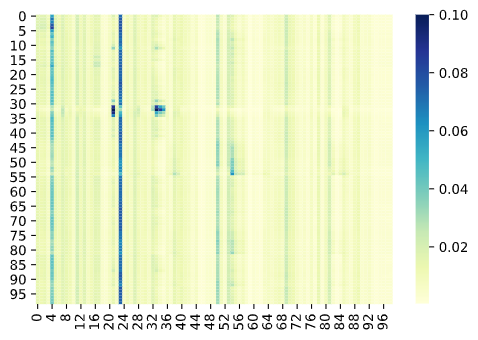}}
    \caption{Heatmaps showing the estimated 1-step transition matrix for all hidden states in (a) and 60-step in (b)}
    \label{fig:heat20}
\end{figure}

\begin{figure}
\centering
\subfigure[]{\label{fig:a}\includegraphics[width=40mm]{Plots/true1.png}}
\subfigure[]{\label{fig:b}\includegraphics[width=40mm]{Plots/est0.png}}

\subfigure[]{\label{fig:a}\includegraphics[width=40mm]{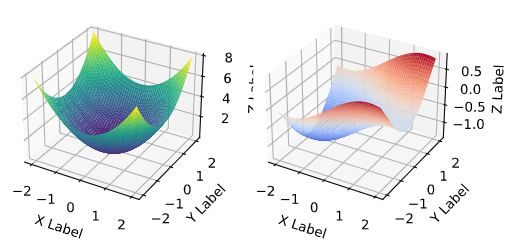}}
\subfigure[]{\label{fig:b}\includegraphics[width=40mm]{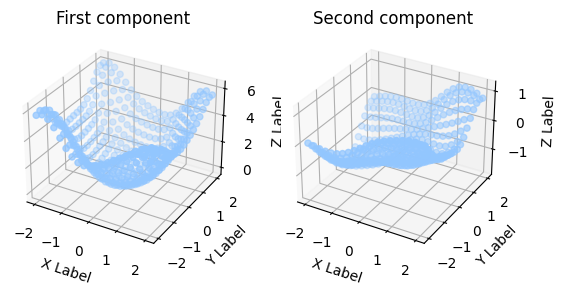}}

\subfigure[]{\label{fig:a}\includegraphics[width=40mm]{Plots/true3.png}}
\subfigure[]{\label{fig:b}\includegraphics[width=40mm]{Plots/est3.png}}

\subfigure[]{\label{fig:a}\includegraphics[width=40mm]{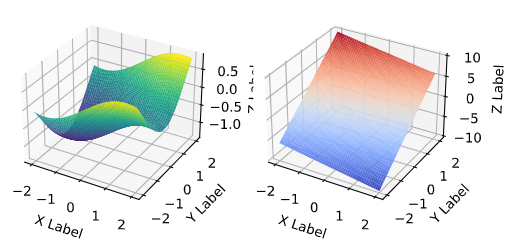}}
\subfigure[]{\label{fig:b}\includegraphics[width=40mm]{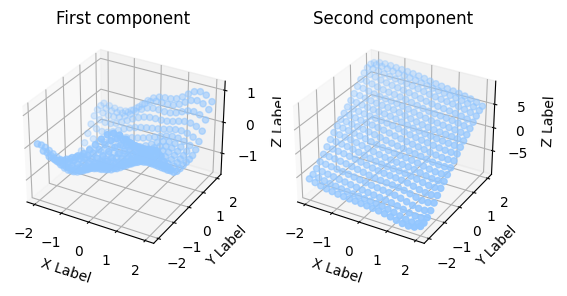}}

\subfigure[]{\label{fig:a}\includegraphics[width=40mm]{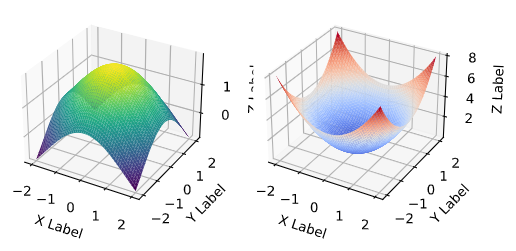}}
\subfigure[]{\label{fig:b}\includegraphics[width=40mm]{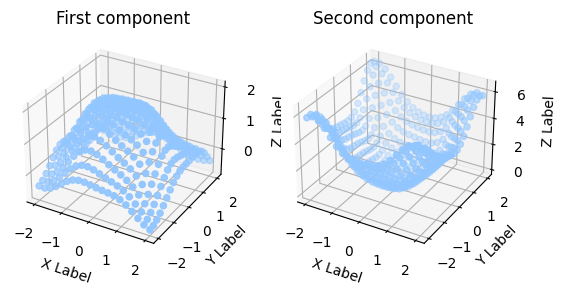}}

\subfigure[]{\label{fig:a}\includegraphics[width=40mm]{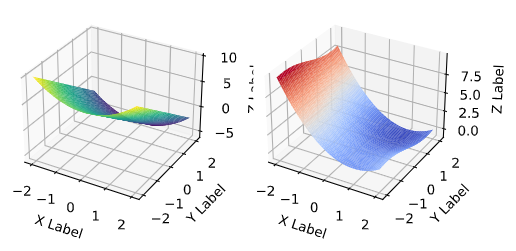}}
\subfigure[]{\label{fig:b}\includegraphics[width=40mm]{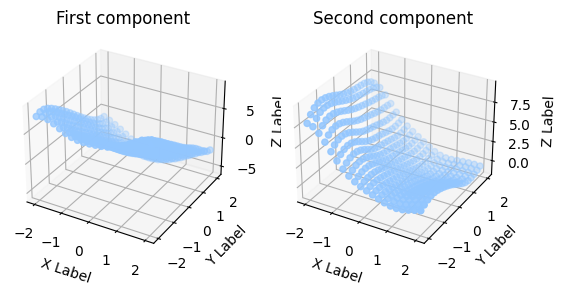}}

\subfigure[]{\label{fig:a}\includegraphics[width=40mm]{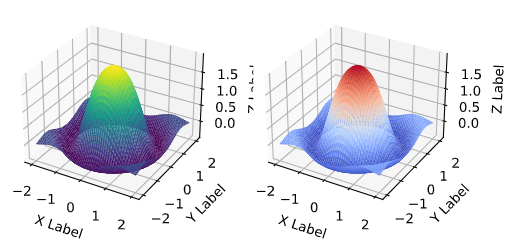}}
\subfigure[]{\label{fig:b}\includegraphics[width=40mm]{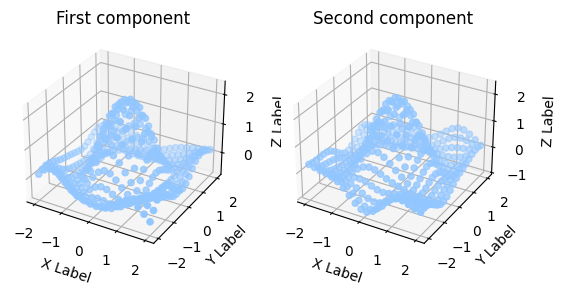}}

\subfigure[]{\label{fig:a}\includegraphics[width=40mm]{Plots/true8.png}}
\subfigure[]{\label{fig:b}\includegraphics[width=40mm]{Plots/est2.png}}
\caption{Simulation data: Left column shows the true 8 functions (each from $[-2,2]\times [-2,2]\to \mathbb{R}^2)$ while the right column gives the 8 estimated clusters}
\label{fig:clusters}
\end{figure}

\end{document}